\documentclass{article}

\usepackage{arxiv}

\usepackage[utf8]{inputenc}
\usepackage[T1]{fontenc}
\usepackage{hyperref}
\hypersetup{
  hidelinks,
  pdftitle={MSC-CMA-ES: Structure-Aware Restarts for CMA-ES via Cyclic Nearest-Better Basin Discovery},
  pdfsubject={cs.NE, cs.AI},
  pdfauthor={Dimitar Nedanovski, Svetoslav Nenov, Dimitar Pilev},
  pdfkeywords={CMA-ES, basin of attraction, nearest-better clustering, restart strategy, CEC2014, CEC2017, CEC2020, CEC2022},
}
\usepackage{url}
\usepackage{booktabs}
\usepackage{amsfonts}
\usepackage{amsmath}
\usepackage{amsthm}
\usepackage{algorithm}
\usepackage{algpseudocode}
\usepackage[numbers]{natbib}
\usepackage{doi}
\usepackage{graphicx}
\usepackage{comment}
\usepackage{multirow}
\usepackage{amssymb}
\usepackage{subcaption}

\theoremstyle{definition}
\newtheorem{definition}{Definition}
\theoremstyle{remark}
\newtheorem{remark}{Remark}

\theoremstyle{plain}

\newtheorem{lemma}{Lemma}

\usepackage{placeins}

\newcommand{\R}{\mathbb{R}}

\newcommand{\norm}[1]{\left\lVert #1 \right\rVert}

\DeclareMathOperator*{\argmin}{argmin}
\newcommand{\todo}[1]{}

\title{MSC-CMA-ES: Structure-Aware Restarts for CMA-ES\\
via Cyclic Nearest-Better Basin Discovery}
\renewcommand{\shorttitle}{MSC-CMA-ES}

\date{July, 2026}

\author{%
  \href{https://orcid.org/0009-0001-7212-9167}{\includegraphics[scale=0.06]{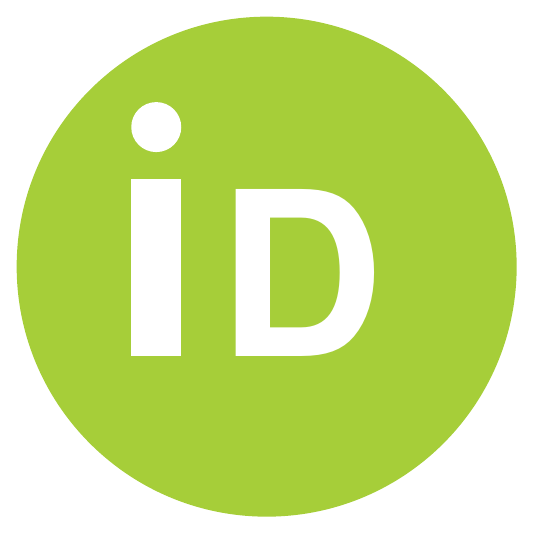}\hspace{1mm}Dimitar Nedanovski}\\
  Faculty of Mathematics and Informatics\\
  Sofia University St.\ Kliment Ohridski\\
  James Baucher Blvd., 1164 Sofia, Bulgaria\\
  \texttt{dnedanovski@gmail.com}
  \And
  \href{https://orcid.org/0000-0002-1444-8522}{\includegraphics[scale=0.06]{orcid-eps-converted-to.pdf}\hspace{1mm}Svetoslav Nenov}\\
  Department of Mathematics\\
  University of Chemical Technology and Metallurgy\\
  Kl. Ohridski, Blvd., 1797, Sofia, Bulgaria\\
  \texttt{nenov@uctm.edu}
  \And
  Dimitar Pilev\\
  Department of Informatics\\
  University of Chemical Technology and Metallurgy\\
  Kl. Ohridski, Blvd., 1797, Sofia, Bulgaria\\
  \texttt{pilev@uctm.edu}
}

\hypersetup{
  pdftitle={MSC-CMA-ES: Structure-Aware Restarts for CMA-ES via Cyclic Nearest-Better Basin Discovery},
  pdfsubject={cs.NE, cs.AI},
  pdfauthor={Dimitar Nedanovski, Svetoslav Nenov, Dimitar Pilev},
  pdfkeywords={CMA-ES, basin of attraction, nearest-better clustering, restart strategy, CEC2014, CEC2017, CEC2020, CEC2022},
}

\begin{document}
\maketitle

\begin{abstract}
CMA-ES behaves, per restart, primarily as a local optimizer; multimodal search relies on
restart strategies such as IPOP and BIPOP, which draw every restart
uniformly and reuse no information from previous evaluations. Multi-Start Clustering CMA-ES (MSC-CMA-ES) makes restarts structure-aware: in alternating cycles, a
Sobol pre-sample is partitioned into approximate basins of attraction
by nearest-better clustering, restarts are seeded basin by basin with
locally scaled step-sizes and population sizes, redundant basin
visits are detected and excluded, and the remaining budget is spent
on a budget-bounded, tolerance-disabled local refinement of the best-so-far solution.
We evaluate the method on the four CEC suites (2014, 2017, 2020, and 2022) at their official budgets, across ten (suite, dimension) cells with dimensions 5--30, with 51 runs per function, and compare it with BIPOP-CMA-ES and five differential-evolution algorithms (ARRDE, jSO, j2020, NL-SHADE-RSP, and L-SRTDE).
Read per function class over the $D\leq 20$ cells, the results split three ways.
On composition functions MSC-CMA-ES attains the best value on all aggregate measures, with $2.7\times$ the fixed-budget target coverage of BIPOP-CMA-ES --- the highest composition coverage of any algorithm evaluated --- and leads in most suites, dimensions, and budgets. On unimodal and simple-multimodal functions it attains the best median error in aggregate but the lowest deep-target coverage, with orderings varying by suite and dimension. On hybrid functions both CMA algorithms trail the leading DE algorithms. All results and scripts are publicly available.
\end{abstract}

\keywords{CMA-ES, BIPOP, basin of attraction, nearest-better clustering, restart strategy, budget scaling}

\section{Introduction}\label{sec:intro}

The Covariance Matrix Adaptation Evolution Strategy
(CMA-ES)~\cite{Hansen2001} adapts a single multivariate Gaussian and is
therefore, per run, a local optimizer: on a multimodal landscape it
converges to one basin of attraction.
Multi-basin search is traditionally delegated to restart strategies, with IPOP (increasing-population) \cite{AugerHansen2005} and BIPOP (bi-population) \cite{Hansen2009} restart strategies serving as the de facto standards. Even two decades after its introduction, BIPOP-CMA-ES remains one of the most robust general-purpose baselines for continuous black-box optimization.
Both
strategies, however, leave information unused: every restart draws
$x_0$ uniformly and sets $\sigma_0=(u-l)/4$ (the default initial step-size), so the topographic
information already acquired on previous evaluations never informs the
next restart.  On multi-basin landscapes this has two recurring costs:
evaluations are spent contracting a domain-scale $\sigma_0$ onto narrow
local structure, and successive restarts re-enter basins that earlier
runs have already exhausted.  Composition-type benchmark functions are
constructed by combining rotated and shifted base functions around a
randomly located global optimum and several deep, randomly located
local optima with differing local
properties~\cite{Liang2014cec2014,AwadEtAl2017,SugathanCEC2020,TornViitanen1992}; a
uniformly initialized restart has no mechanism to prefer the global
basin over the deceptive ones.

We introduce Multi-Start Clustering CMA-ES (MSC-CMA-ES), which makes restarts structure-aware. The search proceeds in cycles. Each cycle opens with Phase~0: a Sobol design of $M = 2^n$ points is evaluated. These points are then partitioned into approximate basins of attraction by Nearest-Better Clustering~\cite{Preuss2010,Preuss2012}. The cutting threshold is selected automatically by a staircase scan of the nearest-better edge-length distribution.

Phase~1 seeds one CMA-ES restart per basin, moving from the smallest to the largest. Each restart initializes from the basin's best sampled point. The initial step-size $\sigma_0$ is computed from the coordinate-wise spread of the basin around its elite centre. The initial population size is proportional to the basin's sample size. A $k$-NN membership vote identifies the basin into which each restart converges. Any basin resolved twice is excluded from further restarts within that cycle. 

Successive cycles alternate between two complementary configurations of this pipeline. The first configuration partitions the sample into many small basins. These are probed by short, small-population runs. The second configuration partitions the space into few large basins. These are searched by long, large-population runs. 

When the remaining budget no longer supports a full cycle, a final refinement stage begins. This stage spends all remaining evaluations on a single CMA-ES run from the incumbent solution. All
tolerance-based convergence criteria are disabled for this run. Consequently, the remaining budget---and not a tolerance floor---terminates the optimization. The full algorithm is specified in Section~\ref{sec:algorithm}.

In Section~\ref{sec:experiments} we report the official-budget results
of seven algorithms---MSC-CMA-ES, BIPOP-CMA-ES, and five
differential-evolution baselines (ARRDE~\cite{ARRDE2023},
jSO~\cite{BrestEtAl2017}, j2020~\cite{Brest2020j2020},
NL-SHADE-RSP~\cite{Stanovov2021nlshadersp}, and
L-SRTDE~\cite{Stanovov2024lsrtde})---read along four cross-sections:
\begin{enumerate}
\item \textbf{By function class} (Table~\ref{tab:class-summary}): on
        the composition class MSC-CMA-ES attains the lowest summed mean,
        median, and best error, and the highest target coverage (FBTC, a fixed-budget analogue of the COCO-style target-hit profile; Section~\ref{sec:protocol})---%
        $2.7\times$ that of the BIPOP-CMA-ES baseline.
\item \textbf{By dimension} (Table~\ref{tab:dim-summary}): the lowest summed mean and median at every $D\le20$, the margin over the nearest baseline ranging from $6\%$ at $D = 10$,
$10.2\%$ at $D = 20$, to $53\%$ at $D = 5$.
\item \textbf{Within one suite, across dimensions} (Fig.~\ref{fig:fbtc-comp}): on the
      CEC2020 composition class MSC-CMA-ES attains the highest FBTC at
      every $D\in\{5,10,15,20\}$, the margin widening with dimension.

\item \textbf{Per cell} (Fig.~\ref{fig:rank-comparison-all}): on the
      composition class the error-magnitude ranking (worst-, median-, and
      best-SUM) that places MSC-CMA-ES at the front is preserved across
      cells, whereas target coverage (FBTC) reorders the field and does
      not track the magnitude measures---on CEC2014 $D = 10$
      (Fig.~\ref{fig:rank-2014d10}) it drops MSC-CMA-ES to third. The
      aggregate composition-FBTC lead of point~1
      (Table~\ref{tab:class-summary}) therefore coexists with per-cell reorderings rather than holding cell by cell.
\end{enumerate}
Section~\ref{sec:experiments} also reports the three regimes where this
advantage does not hold. On the unimodal and
simple-multimodal functions taken together, the outcome is a trade-off:
MSC-CMA-ES attains the lowest summed median error ($138.79$) but the
lowest summed FBTC of the field ($25.46$, against $27.04$--$30.63$ for
the baselines). On the \emph{hybrid} class the deficit
is several-fold: the summed mean and median ($265.11$/$294.35$) trail the
strongest DE baselines ($39.89$/$33.26$), and BIPOP-CMA-ES trails
alongside, so the gap is shared by both CMA variants rather than
introduced by the restart layer. At $D = 30$, outside the $D\le20$ design
envelope, MSC-CMA-ES is mid-pack: across the five convergent algorithms
the summed mean spans roughly a factor of two, while j2020 and
NL-SHADE-RSP diverge on a subset of functions by over an order of
magnitude.

Finally, Section~\ref{sec:budget} reports how the advantage scales
with the evaluation budget, and Section~\ref{sec:discussion} examines
why, since the Phase-0 sample is a fixed up-front
cost that only later restarts amortize.
On the CEC2020 composition class
the coverage margin widens monotonically with both dimension and budget
(Fig.~\ref{fig:fbtc-comp}): at $D = 20$ the FBTC rises from $\approx1.1$
at $2\times10^{6}$ evaluations to $2.6$ (of a maximum $3$) at
$4\times10^{7}$, while no baseline exceeds $1.2$.
The $D = 30$ loss reported in Section~\ref{sec:experiments} is correspondingly
budget-limited rather than intrinsic: on the CEC2017 $D = 30$ composition class
(Fig.~\ref{fig:d30-comp-budget}) MSC-CMA-ES is mid-pack at the official
$3\times10^{5}$ budget but attains the lowest worst-, mean-, and best-SUM at
$10^{6}$ (Section~\ref{sec:discussion}).

The complete source code for 
MSC-CMA-ES, implemented in Python and optimized for the Intel Distribution for Python, 
is provided in the Supplementary Material\footnote{\url{https://github.com/snenovgmailcom/cma_es_project}} alongside the execution scripts. This material also contains all computational results.

\subsection*{Contributions}
\begin{enumerate}
  \item A \textbf{staircase $\phi$ selection procedure} for NBC that
        determines the cutting threshold automatically from the
        edge-length distribution of the nearest-better tree,
        eliminating manual $\phi$ tuning.
  \item \textbf{Per-basin step-size and population-size adaptation},
        replacing the fixed BIPOP heuristic $\sigma_0=(u-l)/4$ with a
        $\sigma_0$ computed from the discovered basin geometry and a
        CMA population size proportional to the basin's sample size.
  \item A \textbf{convergence-tracking mechanism} ($k$-NN majority
        vote) that detects redundant basin visits and excludes
        twice-resolved basins from further restarts within the cycle.
  \item An \textbf{alternating two-configuration cycle schedule} that
        re-clusters the same landscape at two granularities---many
        small basins probed by short runs, few large basins searched
        by long runs---with \textbf{zero-cost Phase-0 reuse} of the
        sample across cycles via the nested-prefix property of Sobol
        sequences.
\end{enumerate}

\section{The MSC-CMA-ES algorithm}\label{sec:algorithm}

\subsection{Setting and notation}

We minimize a continuous function $f:\Omega\to\R$ on a box
$\Omega=\prod_{j=1}^{D}[l_j,u_j]\subset\R^{D}$ under a fixed budget of
$T$ function evaluations.  Distances are measured in normalized
coordinates, $\hat x_j=(x_j-l_j)/(u_j-l_j)$, so that
$\hat\Omega=[0,1]^D$.

The method runs a sequence of \emph{cycles} and closes with a
single refinement.  Each cycle has two phases: \emph{Phase~0}
(Section~\ref{sec:phase0}) evaluates a space-filling sample of $\Omega$
and partitions it into \emph{basins} by nearest-better clustering;
\emph{Phase~1} (Section~\ref{sec:phase1}) then launches one CMA-ES
restart per basin, in ascending order of basin size.  Cycles repeat
while a budget gate (Section~\ref{sec:cycles}) admits another one; once
the main budget is spent, a final CMA-ES \emph{refinement}
(Section~\ref{sec:refinement}) consumes whatever evaluations remain,
polishing the incumbent $x^{\mathrm{best}}$.

The two configurations alternate strictly by cycle parity---cycle $c$ uses the parameters $\theta_{\mathrm{C}}$ for $c$ even and $\theta_{\mathrm{B}}$ for $c$ odd.
Both are universal across dimensions and budgets except the CMA step-size
divisor $\delta_{\mathrm{ref}}$, which is anchored at $D = 10$ and rescaled
by $\sqrt{10/D}$ (Table~\ref{tab:configs}). Algorithm~\ref{alg:overview}
gives the skeleton, Algorithm~\ref{alg:msc} the full version.

\begin{algorithm}[t]
\caption{MSC-CMA-ES (skeleton; full version in
  Algorithm~\ref{alg:msc})}
\label{alg:overview}
\begin{algorithmic}[1]
\Require objective $f$, box $\Omega$, budget $T$, configuration pair
  $(\theta_{\mathrm{C}},\theta_{\mathrm{B}})$
\State reserve refinement budget:\
  $T_{\mathrm{main}}\gets T-\lfloor r_{\mathrm{C}}T\rfloor$
\For{cycle $c=0,1,2,\dots$ while the budget gate
  (Sec.~\ref{sec:cycles}) holds}
  \State $\theta\gets\theta_{\mathrm{C}}$ if $c$ even else
    $\theta_{\mathrm{B}}$ \Comment{alternating configurations}
  \State \textbf{Phase~0:} sample $\Omega$, cluster into basins
    $B_1\le\cdots\le B_K$
    \Comment{Sec.~\ref{sec:phase0}; odd cycles reuse the previous sample}
  \State \textbf{Phase~1:} one CMA-ES restart per basin, smallest first
    \Comment{Sec.~\ref{sec:phase1}}
\EndFor
\If{$T - \mathrm{nfev} \ge 10D$}
  \State \textbf{Refinement:} single CMA-ES from $x^{\mathrm{best}}$ on
    the remaining budget \Comment{Sec.~\ref{sec:refinement}}
\EndIf
\State \Return incumbent $x^{\mathrm{best}}$
\end{algorithmic}
\end{algorithm}

\subsection{Phase 0: basin discovery}\label{sec:phase0}

\paragraph{Design.}
Phase~0 evaluates a Sobol sample $P=\{x_1,\dots,x_M\}\subset\Omega$ with $M = 2^n$ and records $f_i=f(x_i)$, $i=1,\dots,M$.

\paragraph{Nearest-better clustering.}
Every sampled point is connected to its nearest \emph{better}
neighbour, in normalized coordinates:
\begin{equation}\label{eq:nbtree}
\mathrm{nb}(i) =
\begin{cases}
\displaystyle\min\Bigl\{\argmin_{j:\,f_j<f_i}\ \norm{\hat x_j-\hat x_i}\Bigr\}, & \{j: f_j<f_i\}\neq\varnothing,\\[8pt]
-1, & \text{otherwise,}
\end{cases}
\qquad
\ell_i = \begin{cases}
\norm{\hat x_{\mathrm{nb}(i)}-\hat x_i}, & \mathrm{nb}(i) \geq 0,\\[8pt]
0, & \text{otherwise.}
\end{cases}
\end{equation}
The sample-best point has no better neighbour and becomes the root;
the result is a tree on the $M$ sample points with $M-1$ edges.
Relation~\eqref{eq:nbtree} is evaluated with incremental $k$-NN queries
(cap $k_{\max}=256$); if no better point appears
within the cap, the edge falls back to the sample-best point.

We follow the two heuristic rules of Preuss~\cite{Preuss2010,Preuss2012} to build the NBC partition on top of this tree.

\paragraph{Rule~1 (long-edge cut).}
Let
\begin{equation*}
\bar{\ell} = \frac{1}{M-1} \sum_{i:\,\mathrm{nb}(i)\neq -1} \ell_i
\end{equation*}
be the mean edge length.  Cut all edges exceeding a multiple of the
mean:
\begin{equation}\label{eq:nbc-rule1}
\text{cut edge } (i, \mathrm{nb}(i))
  \quad\text{if}\quad \ell_i > \phi \, \bar{\ell},
\end{equation}
where $\phi > 0$ is the cutting threshold.  Point~$i$ whose edge is
cut becomes a new basin root.

\paragraph{Rule~2 (hub detection).}
A point $j$ with high indegree whose own outgoing edge is much longer
than its incoming edges is a local attractor incorrectly connected to
a distant parent.  Let
$\mathrm{in}(j) = \{i : \mathrm{nb}(i) = j\}$ in the graph pruned by
Rule~1, and
$\bar{\ell}_{\mathrm{in}}(j) = |\mathrm{in}(j)|^{-1}
\sum_{i \in \mathrm{in}(j)} \ell_i$.
Then:
\begin{equation}\label{eq:nbc-rule2}
\text{cut edge } (j, \mathrm{nb}(j))
  \quad\text{if}\quad
  |\mathrm{in}(j)| \ge n_{\min}
  \;\text{ and }\;
  {\ell_j} > b{\bar{\ell}_{\mathrm{in}}(j)},
\end{equation}
where $n_{\min}$ is the minimum indegree and $b > 0$ is the ratio
threshold, a component of the tuned configuration
(Table~\ref{tab:configs}).

After both rules are applied, basin labels are obtained by pointer jumping
on the pruned parent array.
Let $\mathcal{T}:\{1,\dots,M\}\to\{1,\dots,M\}$
be the pruned-parent map, with $\mathcal{T}(i)=\mathrm{nb}(i)$ on the edges
retained after Rules~1--2 and $\mathcal{T}(i)=i$ on the roots (points whose
edge was cut, and the NB-tree root itself).
Let $\mathcal{T}^{k}$ denote the $k$-fold composition of $\mathcal{T}$
with itself, $\mathcal{T}^{k+1} = \mathcal{T}\circ \mathcal{T}^{k}$. Since $\mathcal{T}$ fixes the roots and every point reaches its root in finitely many steps, there is a smallest $d$ with $\mathcal{T}^{k}=\mathcal{T}^{d}$ for all $k\ge d$, and
$\mathrm{root}(i)=\mathcal{T}^{d}(i)$. Pointer jumping computes this
fixed point by repeated squaring,
\[
  \mathcal{R}_{0}=\mathcal{T},\qquad
  \mathcal{R}_{t+1}=\mathcal{R}_{t}\circ\mathcal{R}_{t},
\]
so that $\mathcal{R}_{t}=\mathcal{T}^{2^{t}}$: each iteration doubles
the composed power and thus halves the remaining path length to the
root. Hence $\mathcal{R}_{t}=\mathrm{root}$ as soon as $2^{t}\ge d$,
i.e. after $\lceil\log_{2}d\rceil\le\lceil\log_{2}M\rceil$ iterations
(the worst case $d=M-1$ is a single nearest-better chain).
The resulting partition is
$\mathcal{S}=\mathcal{C}_1\sqcup\cdots\sqcup\mathcal{C}_B$, where basins
smaller than the minimum size $s_{\min}$ (Table~\ref{tab:configs}) are
discarded as noise.

\subsection{Staircase $\phi$ selection}\label{sec:staircase}

The threshold~$\phi$ is the main free parameter of
NBC~\cite{Preuss2010}, and its choice is known to be
landscape-dependent.  Rather than using a fixed~$\phi$, we
automatically select it to achieve a target number of basins
$n_{\mathrm{target}}$.

As $\phi$ decreases from $+\infty$ to $0$, the Rule-1 cut set changes
only when $\phi$ crosses a ratio $\ell_i / \bar{\ell}$.  Sort these
ratios in non-increasing order to obtain the candidate ladder
\begin{equation}\label{eq:phi-stars}
  \phi_k^* = {\ell_{(k)}}{\bar{\ell}}^{-1},
  \qquad
  \phi_1^* \ge \phi_2^* \ge \cdots \ge \phi_{M-1}^*,
\end{equation}
where $\ell_{(1)} \ge \cdots \ge \ell_{(M-1)}$ are the sorted NB-tree
edge lengths and $\bar{\ell}$ their mean.

\begin{lemma}\label{lem:rule1-mono}
Assume the edge lengths are pairwise distinct.  Let $n_b(\phi)$ be
the number of components after Rule~1.  Then $n_b$ is non-increasing
in $\phi$, piecewise constant with jumps exactly at the $\phi_k^*$,
and $n_b(\phi) = k+1$ on the open interval
$(\phi_{k+1}^*, \phi_k^*)$.
\end{lemma}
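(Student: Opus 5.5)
The plan is to reduce everything to one structural fact. The NB tree is a tree on $M$ vertices with $M-1$ edges. Deleting any set of $m$ of these edges leaves a forest with exactly $m+1$ connected components: each edge deletion in a forest raises the component count by one, since a tree edge is a bridge. The components after Rule~1 are exactly the basins of the pruned-parent map $\mathcal{T}$ with only the long-edge cuts applied. Each cut point becomes a root, and following $\mathcal{T}$ from any point stays inside its component and ends at its unique root. So, writing $m(\phi)=\#\{i:\mathrm{nb}(i)\ge 0,\ \ell_i>\phi\bar\ell\}$ for the number of cut edges, I would first establish
\begin{equation*}
  n_b(\phi) = m(\phi)+1 .
\end{equation*}
Here $\bar\ell>0$ is assumed, which holds whenever $M\ge 2$ and the sample points are distinct.

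Second, I would analyse $m(\phi)$ directly. The condition $\ell_i>\phi\bar\ell$ is equivalent to $\phi<\ell_i/\bar\ell$. Hence $m(\phi)=\#\{k:\phi_k^*>\phi\}$, the number of ladder values strictly above $\phi$. This count is non-increasing in $\phi$, because enlarging $\phi$ can only shrink the set $\{k:\phi_k^*>\phi\}$. It is also constant on every open interval containing no $\phi_k^*$. So $n_b$ is non-increasing and piecewise constant, and it can change only at ladder values.

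Pairwise distinctness gives strict inequalities $\phi_1^*>\phi_2^*>\cdots>\phi_{M-1}^*$. For $\phi\in(\phi_{k+1}^*,\phi_k^*)$, exactly the indices $1,\dots,k$ satisfy $\phi_j^*>\phi$, so $m(\phi)=k$ and $n_b(\phi)=k+1$. The same formula covers the unbounded end intervals if we set $\phi_0^*=+\infty$ (with $k=0$, giving $n_b=1$) and $\phi_M^*=0$ (with $k=M-1$, giving $n_b=M$). Comparing adjacent intervals shows that $n_b$ takes the values $k$ and $k+1$ on either side of $\phi_k^*$. Every $\phi_k^*$ is therefore a genuine jump of size one, which gives the ``exactly at'' claim.

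The only delicate step is the component-counting identity, specifically checking that the paper's operational notion of components (roots via $\mathcal{T}$, pointer jumping) coincides with graph components of the forest. I would handle it by noting that within each remaining subtree there is exactly one vertex without a retained outgoing edge, namely the cut point or the global root. That vertex is the unique fixed point of $\mathcal{T}$ reachable from the subtree. I would also remark that the $k_{\max}$ fallback edge still yields a tree, because parents always have strictly smaller $f$, so no cycles can arise. Finally, the value of $n_b$ at the jump points themselves is irrelevant to the statement; by the strict inequality in Rule~1 it equals the value on the right-hand interval.
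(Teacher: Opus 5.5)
Your proposal is correct and follows essentially the same route as the paper. Both arguments identify the Rule-1 cut set on $(\phi_{k+1}^*,\phi_k^*)$ as the $k$ longest edges, use the fact that deleting $k$ edges from a tree leaves $k+1$ components, and read off monotonicity and the jump locations from the cut set growing by one edge at each ladder value. Your extra checks are refinements the paper leaves implicit: that the pointer-jumping labels coincide with the forest components, that $\bar\ell>0$, the unbounded end intervals, and the value of $n_b$ at the jump points.
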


\begin{proof}
For $\phi\in(\phi_{k+1}^*,\phi_k^*)$ the cut set
$\{i:\ell_i>\phi\bar\ell\}$ consists exactly of the edges of rank
$1,\dots,k$.  The NB graph is a tree on $M$ vertices with $M-1$
edges, so deleting $k$ of them leaves exactly $k+1$ connected
components.  Monotonicity and the jump locations follow since the
cut set grows by one edge each time $\phi$ crosses a $\phi_k^*$ from
above.  With ties, consecutive intervals merge and the statement
holds with multiplicities.
\end{proof}

\paragraph{Algorithm.}
Let $\phi^*_1 \ge \phi^*_2 \ge \cdots \ge \phi^*_{M-1}$ denote the
non-root edge ratios $\ell_i/\bar\ell$ sorted in non-increasing order
(the candidate ladder of Eq.~\ref{eq:phi-stars}), and define 
\[
  \phi^{(k)} = \tfrac12\bigl(\phi^*_k + \phi^*_{k+1}\bigr),
  \qquad k = 1,\dots,M-2 .
\]
At each $\phi^{(k)}$ we apply Rule~1 and Rule~2 to the cached NB-tree,
pointer-jump the pruned parent array to labels, and record the count of
useful basins
\[
  n_{\mathrm{useful}}\bigl(\phi^{(k)}\bigr)
  =  \bigl|\{\, a : |\mathcal{C}_a(\phi^{(k)})| \ge s_{\min} \,\}\bigr| ,
\]
where $\mathcal{C}_a(\phi)$ are the components induced at threshold
$\phi$; this scan incurs no objective evaluations. The selected
threshold is 
\[
  k^* = \min\bigl\{\, k \in \{1,\dots,M-2\} :
  n_{\mathrm{useful}}(\phi^{(k)}) \ge n_{\mathrm{target}} \,\bigr\},
  \qquad \phi_{\mathrm{used}} = \phi^{(k^*)} .
\]
When this set is empty we fall back to
$\phi_{\mathrm{used}} = \tfrac12\,\phi^*_{M-1}$, the finest Rule-1
pruning (every non-root edge cut); basins below $s_{\min}$ are then
discarded as usual.

By Lemma~\ref{lem:rule1-mono} the raw Rule-1 component count is
monotone in $\phi$ with unit jumps exactly at the $\phi^*_k$, so the
ladder enumerates every distinct Rule-1 pruning. The composed map
$\phi \mapsto n_{\mathrm{useful}}(\phi)$, however, is \emph{not}
monotone: the $s_{\min}$ filter removes components as they shrink below
$s_{\min}$ when $\phi$ decreases, and Rule~2 cuts on the already-pruned
in-degrees.
Because $\phi \mapsto n_{\mathrm{useful}}(\phi)$ is not monotone, a
bisection may miss the target; a single descending pass is used instead.
Scanning $\phi$ from large to small and returning at the first success
yields the largest ladder threshold attaining $n_{\mathrm{target}}$.

\begin{algorithm}[t]
\caption{Staircase $\phi$ selection}
\label{alg:staircase}
\begin{algorithmic}[1]
\Require NB-tree parent array, edge lengths, mean~$\bar{\ell}$,
  target~$n_{\mathrm{target}}$, min basin size~$s_{\min}$
\Ensure $\phi_{\mathrm{used}}$, history $[(\phi, n)]$
\State Sort ratios $\ell_i / \bar{\ell}$ descending:
  $\phi_1^* \ge \phi_2^* \ge \cdots \ge \phi_{M-1}^*$
\For{$k = 1, 2, \ldots, M-2$}
  \State $\phi \gets (\phi_k^* + \phi_{k+1}^*) / 2$
  \State Apply Rule~1 + Rule~2 at~$\phi$ \Comment{no new evals}
  \State Count basins with $|\mathcal{C}_a| \ge s_{\min}$
  \State Record $(\phi, n_{\mathrm{useful}})$
  \If{$n_{\mathrm{useful}} \ge n_{\mathrm{target}}$}
    \State \Return $\phi$
  \EndIf
\EndFor
\State \Return $\phi_{M-1}^* / 2$ \Comment{fallback if target unreachable}
\end{algorithmic}
\end{algorithm}

The proposed procedure applies Rule~1 and Rule~2
cuts to the stored raw NB-tree without any new function evaluations:
it operates purely on the cached parent array and edge lengths from
the initial NBC construction.  The staircase may call this routine
hundreds of times during the $\phi$ scan, but each call is
$O(M\log M)$ (vectorized pointer jumping), making the total cost
negligible compared to the $M$ function evaluations.

\subsection{Phase 1: structure-aware restarts}\label{sec:phase1}

Basins are processed in ascending order of size.  For each basin $B$
not yet excluded (see below), one CMA-ES~\cite{Hansen2001} restart is
launched with three quantities derived from the basin geometry:

\paragraph{Initial point.}
$x_0=\argmin_{x\in B} f(x)$, the best sampled point of the basin.

\paragraph{Initial step-size.}
Let $c$ be the \emph{elite centre}, the mean of the best
$\max\{1,\lfloor\varepsilon|B|\rfloor\}$ points of $B$ by $f$-value, and let
\begin{equation}
  s_j = Q_{75}\bigl(\,|x_{ij}-c_j| : x_i\in B\,\bigr),  \qquad j=1,\dots,D
\end{equation}
be the per-axis upper-quartile spread around it.  Then
\begin{equation}
  \sigma_0
  =
  \max\!\Bigl(\;
     \frac{\operatorname{median}_j\, s_j}{\delta(D)},\;
     1\Bigr),
  \qquad
  \delta(D)=\delta_{\mathrm{ref}}\sqrt{10/D},
  \label{eq:sigma0}
\end{equation}
where $\delta_{\mathrm{ref}}$ is the configuration's divisor anchored
at $D=10$ and the $\sqrt{10/D}$ factor is the standard CMA box-scaling
law; the lower bound $1$ is a safety net against degenerate basins.

\paragraph{Population size.}
The population size is proportional to the basin's sample size, bounded
below by the CMA-ES default and above by a configuration-dependent value:
\begin{equation}\label{eq:popsize}
  \lambda
  = \min\!\bigl(\max(\lambda_H,\ \lceil \rho\,|B| \rceil),\
                \max(\lambda_H,\ \lambda_{\max})\bigr),
  \qquad
  \lambda_H = 4+\lfloor 3\ln D\rfloor,
\end{equation}
where $\rho$ is the (configuration-dependent) proportionality factor,
$\lambda_{\max}$ the configuration-dependent upper bound
(Table~\ref{tab:configs}), and $\lambda_H$ the default CMA-ES population
size. When $\lambda_{\max}<\lambda_H$ (the C~configuration at $D\ge 30$)
the default $\lambda_H$ prevails over the upper bound.

\begin{remark}
The two bounds in Eq.~\eqref{eq:popsize} can invert. The lower bound
$\lambda_H=4+\lfloor 3\ln D\rfloor$ grows with dimension
($\lambda_H=10,12,14,15$ at $D=10,20,30,50$), whereas the upper bound
$\lambda_{\max}^{\mathrm{C}}=12$ for the C~configuration is fixed; hence
$\lambda_H>\lambda_{\max}^{\mathrm{C}}$ for $D\ge 30$. In that regime the
CMA-ES default $\lambda_H$ overrides the upper bound, so that
C~restarts use $\lambda=\lambda_H$ rather than the nominal
$\lambda_{\max}^{\mathrm{C}}$. This is deliberate: at the fixed $D = 10$
tuning, the small upper bound $\lambda_{\max}^{\mathrm{C}}$ starves the
covariance update in high dimension, where a sample of at least
$\lambda_H$ is needed for a stable rank-$\mu$ estimate. Writing the
population as $\min(\max(\lambda_H,\lceil\rho|B|\rceil),\,
\max(\lambda_H,\lambda_{\max}))$ rather than a plain
$\operatorname{clamp}$ encodes this precedence, keeping the lower bound
active even when it exceeds the upper bound.
\end{remark}

\paragraph{Stopping.}
A restart stops at the first of: (i) an internal CMA-ES criterion
with tolerances
$\texttt{tolfun}=10^{-\tau_f}$, $\texttt{tolx}=10^{-\tau_x}$;
(ii) absolute fitness convergence of the current population,
$\max(F)-\min(F)<s_{\mathrm{tol}}$;
(iii) budget exhaustion.

\paragraph{Convergence tracking and exclusion.}
Basin membership of any point $x$ is decided by a $k$-NN majority
vote ($k=5$) over the labelled Phase-0 sample.  Before a restart is
launched, its $x_0$ is probed and the restart is skipped if the vote
maps it into an excluded basin.  After a restart finishes, the vote
of its best point identifies the basin it converged into; a basin
that has received two convergences is excluded from all further
restarts of the current cycle.  Exclusion is reset at the start of
each cycle: the next cycle re-clusters the landscape at a
different granularity, so basin identities do not carry over.

\subsection{Cycles, alternating configurations, and sample reuse}
\label{sec:cycles}

The pair (Phase~0, Phase~1) constitutes one \emph{cycle}.  Cycles
repeat with alternating configurations,
$\theta_{\mathrm{C}},\theta_{\mathrm{B}},\theta_{\mathrm{C}},\dots$:
the C~configuration partitions the sample into many small basins
probed by short, small-population runs
($n_b=25$, $s_{\min}=7$, $\lambda_{\max}=12$), the B~configuration
into few large basins searched by long, large-population runs
($n_b=5$, $s_{\min}=155$, $\lambda_{\max}=236$); all values in
Table~\ref{tab:configs}.

Since both configurations draw the same Sobol design size $M$, every
second (odd) cycle reuses the points and $f$-values cached from the
previous cycle. We re-cluster them under the odd cycle's own
configuration (in particular its own $b$, $s_{\min}$, and staircase
target), spending \emph{zero} additional evaluations on Phase~0.
A new cycle is started only if the remaining main budget is at least
$50D$ when the cycle is reuse-eligible, and $M+50D$ otherwise.

The C/B alternation plays, at the cycle level, the role of BIPOP's small- and large-population regimes; the mechanism itself is not BIPOP --- each restart is a single CMA-ES run, and regime selection follows cycle parity rather than budget balancing.

\subsection{Final refinement}\label{sec:refinement}

A fraction $r_C$ of the total budget is reserved at the start
($T_{\mathrm{main}}=T-\lfloor rT\rfloor$); the reservation is a floor,
not an allocation, since the refinement stage spends everything that
remains.  When the cycle loop terminates and at least $10D$
evaluations remain, a single CMA-ES run is started from the 
$x^{\mathrm{best}}$ with
$\sigma_0=\min\!\bigl(s^\star,\,\operatorname{median}_j(u_j-l_j)/100\bigr)$,
where $s^\star$ is the final step-size of the restart that produced
$x^{\mathrm{best}}$ (fallback:
$s^\star=\operatorname{median}_j(u_j-l_j)/\delta(D)$),
$\lambda=\max(\lambda_H,10)$, and \emph{all} tolerance-based stopping
criteria disabled
($\texttt{tolfun}=\texttt{tolx}=\texttt{tolfunhist}=0$, iteration and
stagnation caps lifted).  The run therefore terminates on the budget
or on floating-point no-effect conditions only: the budget, not a
tolerance, ends the optimization.

\begin{algorithm}[t]
\caption{MSC-CMA-ES}
\label{alg:msc}
\begin{algorithmic}[1]
\Require $f$, box $\Omega$, budget $T$, schedule
         $(\theta_{\mathrm{C}},\theta_{\mathrm{B}})$
\State $T_{\mathrm{main}} \gets T-\lfloor r_{\mathrm{C}} T\rfloor$;\quad
       $t\gets 0$;\quad cycle $c\gets 0$
\While{$t < T_{\mathrm{main}}$ \textbf{and} budget gate
       (Sec.~\ref{sec:cycles}) holds}
  \State $\theta\gets\theta_{\mathrm{C}}$ if $c$ even else
         $\theta_{\mathrm{B}}$
  \If{$c$ odd \textbf{and} previous sample reusable}
    \State reuse Sobol prefix; re-cluster with $\theta$
           \Comment{0 evaluations}
  \Else
    \State evaluate Sobol design $P$, $|P|=M$ \Comment{$t\gets t+M$}
  \EndIf
  \State build NB tree~\eqref{eq:nbtree}; select $\phi$ by
         staircase; basins $B_1{\le}\cdots{\le}B_K$
  \State $E\gets\emptyset$ \Comment{excluded basins, per cycle}
  \For{$B$ in ascending size, while $t<T_{\mathrm{main}}$}
    \State $x_0\gets$ best point of $B$;\
           \textbf{if} vote$(x_0)\in E$ \textbf{skip}
    \State run CMA-ES$(x_0,\sigma_0~\eqref{eq:sigma0},
           \lambda~\eqref{eq:popsize})$ until stop
           (Sec.~\ref{sec:phase1})
    \State $\beta\gets$ vote(best point of run);\
           exclude $\beta$ into $E$ on its 2nd convergence
  \EndFor
  \State $c\gets c+1$
\EndWhile
\If{$T-t\ge 10D$}
  \State run CMA-ES from $x^{\mathrm{best}}$ with all tolerance
         stops disabled until $t=T$ \Comment{refinement}
\EndIf
\State \Return $x^{\mathrm{best}}$
\end{algorithmic}
\end{algorithm}

\begin{table}[t]
\centering
\caption{The two configurations (Optuna \cite{Akiba2019} best trials, tuned on CEC2017
  $D = 10$ at the official budget; exact values in the repository).
  $\delta_{\mathrm{ref}}$ is anchored at $D = 10$ and rescaled by
  $\sqrt{10/D}$, Eq.~\eqref{eq:sigma0}; all other parameters are
  universal across dimensions and budgets.}
\label{tab:configs}
\small
\begin{tabular}{lccl}
\toprule
Parameter & $\theta_C$ & $\theta_B$ & Role \\
\midrule
$M$ (Phase-0 sample)        & 4096   & 4096   & Sobol design size \\
$n_b$ (target basins)       & 25     & 5      & staircase target \\
$s_{\min}$ (useful basin)   & 7      & 155    & min.\ basin size \\
$b$ (Rule-2 ratio)          & 2.498  & 3.029  & hub cut, Eq.~\eqref{eq:nbc-rule2} \\
$n_{\min}$ (Rule-2 in-degree)      & 3      & 3      & hub cut \\
$\varepsilon$ (elite frac.) & 0.411  & 0.065  & elite centre, Eq.~\eqref{eq:sigma0} \\
$\delta_{\mathrm{ref}}$     & 5.989  & 1.661  & $\sigma_0$ divisor \\
$\rho$ (popsize frac.)      & 0.129  & 0.434  & Eq.~\eqref{eq:popsize} \\
$\lambda_{\max}$            & 12     & 236    & popsize cap \\
$\tau_f$ / $\tau_x$         & 4 / 8  & 4 / 8  & CMA tolerances \\
$s_{\mathrm{tol}}$          & 13.82  & 13.82  & population-range stop \\
$r_C$ (refine reservation)    & 0.04   & 0.04   & Sec.~\ref{sec:refinement} \\
$k$-NN vote                 & 5      & 5      & membership \\
\bottomrule
\end{tabular}
\end{table}

\section{Experiments and results}\label{sec:experiments}

\subsection{Protocol}\label{sec:protocol}

We evaluate all algorithms on four benchmark suites from the IEEE
Congress on Evolutionary Computation (CEC), observing strictly their
official maximum evaluation budgets.  Within the design envelope of the
proposed method ($D\le 20$) the experimental cells (budgets) are: CEC2014
$D = 10$ ($10^{5}$); CEC2017 $D = 10$ ($10^{5}$); CEC2020 $D = 5$
($5\times10^{4}$), $D = 10$ ($10^{6}$), $D = 15$ ($3\times10^{6}$), and
$D = 20$ ($10^{7}$); and CEC2022 $D = 10$ ($2\times10^{5}$) and $D = 20$
($10^{6}$)---eight cells in all.  Two further cells outside the
envelope, CEC2014 and CEC2017 at $D = 30$ ($3\times10^{5}$), are
examined separately to probe the scalability boundary.  Following the
suite authors' recommendation, the deprecated function $f_2$ of CEC2017
is excluded for all algorithms, leaving $123$ functions across the eight
envelope cells.

The BIPOP-CMA-ES baseline is evaluated using the \texttt{pycma} reference 
implementation \cite{Hansen2019pycma} with its native restart logic and the standard initial step-size 
$\sigma_0=(u-l)/4$. The five DE baselines, all descendants of the (L-)SHADE family~\cite{Tanabe2014}, are executed via their respective \texttt{minionpy} 
C++ reference implementations \cite{Minion2025}, configured with the author-recommended default parameters.

MSC-CMA-ES operates under the alternating C/B schedule described in Section~\ref{sec:cycles}, 
utilizing the two fixed configurations detailed in Table~\ref{tab:configs}. To assess the 
generalization capability and structural robustness of the approach, no per-suite or 
per-dimension hyperparameter tuning was conducted. Both configurations were tuned exactly once on the CEC2017 $D=10$ cell at its official budget; consequently, all other parameters are universal across dimensions and budgets. The class~B parameters were tuned on the unimodal, simple-multimodal, and hybrid functions ($f_1$--$f_{20}$), and the class~C parameters on the
composition functions ($f_{21}$--$f_{30}$).

Each (algorithm, function, cell) configuration is evaluated over 51 independent runs using 
distinct random seeds from $\{0, \dots, 50\}$. Final optimization errors satisfying 
$|f(\mathbf{x}) - f^*| \le 10^{-8}$ are floored to zero, matching the global target threshold 
floor. For each individual function, the Supplementary Material provides the mean, median, and best final errors, alongside the fixed-budget target
coverage (FBTC): with $51$ log-uniformly spaced targets
$\tau\in[10^{-8},10^{2}]$, FBTC is the fraction of (target, run) pairs
whose final error satisfies $e\le\tau$.  It lies in $[0,1]$ per function
and equals the terminal cross-section of the COCO-style target-hit
profile; the result tables report its sum over the functions of each
class or dimension.
Statistical significance is evaluated at 
a per-function level via the Wilcoxon signed-rank test ($\alpha=0.05$), applying the Benjamini--Hochberg false discovery rate (FDR) correction 
within each (suite, dimension) cell, separately for each baseline-MSC-CMA-ES comparison, over the functions of that cell.

Environment: Python 3.13.5 (anaconda3 env intelpython), NumPy 2.3.1, SciPy 1.15.3, pycma 4.4.2, minionpy 1.5.0. Hardware: Intel Xeon Platinum 8160 @ 2.10 GHz, 192 threads, 251 GiB RAM.

\subsection{Per-dimension results at official budgets}\label{sec:per-dim}

Table~\ref{tab:dim-summary} reports the summed mean, median, and
fixed-budget target coverage (FBTC) per dimension at the official CEC
budgets, for $D\in\{5,10,15,20,30\}$.  At the official budget,
MSC-CMA-ES attains the lowest summed mean and median error at every
dimension up to and including $D = 20$.

At $D = 30$ (budget $10^{4}\,D$) the lowest summed mean and median pass
to the DE baselines, with MSC-CMA-ES mid-pack.  This is the boundary of
the design envelope of Section~\ref{sec:protocol}.  Preuss notes that
niching is not a universal cure: once a landscape is multimodal enough
that not enough basins of attraction can be identified from a sample of
affordable size, coordinated niching gains no clear advantage over
uncoordinated restarts~\cite{Preuss2010,Preuss2012}.
A similar phenomenon is reported in~\cite{ARRDE2023}: restarts become less effective as dimension grows, and at large $D$ ARRDE shifts weight to its underlying LSHADE/jSO search dynamics.
A method built on NBC basin
identification loses its low-dimensional advantage in this regime,
which is consistent with the observed $D = 30$ ordering; the present
data do not by themselves isolate the cause.  Two DE baselines (j2020,
NL-SHADE-RSP) diverge on a subset of the $D = 30$ functions, raising
their summed means more than an order of magnitude above the rest of
the field.

On fixed-budget target coverage the picture is complementary: a DE
baseline attains the highest FBTC at every dimension---NL-SHADE-RSP at
$D = 5$ and $D = 10$, j2020 at $D = 15$, ARRDE at $D = 20$, and L-SRTDE
at $D = 30$---while neither CMA variant leads coverage at any
dimension.  The summed errors live on very different scales across $D$
(tens at $D = 5$, thousands at $D = 30$), so the per-dimension rows are
the comparable unit; a cross-dimension total would be dominated by the
$D = 30$ divergences and is not reported.

\begin{table}[htbp]\centering
\caption{Per-dimension totals at the official CEC budgets. Lowest error / highest FBTC per row in bold.}
\label{tab:dim-summary}\small
\begin{tabular}{ll||rr|rrrrr}
\toprule
$D$ & Metric & MSC-CMA-ES & BIPOP-CMA-ES & ARRDE & j2020 & jSO & L-SRTDE & NL-SHADE-RSP \\
\midrule
\multirow{3}{*}{$\substack{\text{\small $D = 5$}\\ \text{\small (10)}}$}
 & mean   & \textbf{74.37} & 391.23 & 159.08 & 189.14 & 452.12 & 482.34 & 227.22 \\
 & median & \textbf{4.68} & 456.65 & 105.79 & 105.97 & 452.87 & 453.24 & 301.77 \\
 & FBTC   & 5.53 & 4.72 & 6.73 & 6.15 & 5.84 & 5.07 & \textbf{7.83} \\
\midrule
\multirow{3}{*}{$\substack{\text{\small $D = 10$}\\ \text{\small (81)}}$}
 & mean   & \textbf{4400.09} & 6266.20 & 4699.62 & 5408.53 & 5948.12 & 38389.88 & 5219.56 \\
 & median & \textbf{4715.55} & 6322.05 & 4933.86 & 5336.90 & 6064.13 & 6388.68 & 5300.49 \\
 & FBTC   & 27.40 & 26.09 & 32.42 & 28.27 & 32.00 & 29.88 & \textbf{33.54} \\
\midrule
\multirow{3}{*}{$\substack{\text{\small $D = 15$}\\ \text{\small (10)}}$}
 & mean   & \textbf{280.88} & 638.44 & 407.84 & 557.14 & 906.86 & 938.01 & 563.23 \\
 & median & \textbf{215.06} & 624.94 & 354.44 & 526.28 & 883.77 & 907.83 & 531.48 \\
 & FBTC   & 3.09 & 3.33 & 2.47 & \textbf{3.52} & 2.10 & 2.12 & 3.46 \\
\midrule
\multirow{3}{*}{$\substack{\text{\small $D = 20$}\\ \text{\small (22)}}$}
 & mean   & \textbf{1023.37} & 1366.02 & 1139.29 & 1747.65 & 1827.39 & 1811.99 & 1720.27 \\
 & median & \textbf{1041.84} & 1246.21 & 1141.68 & 1775.53 & 1825.42 & 1798.42 & 1635.75 \\
 & FBTC   & 7.15 & 7.88 & \textbf{8.02} & 6.52 & 5.87 & 6.44 & 7.15 \\
\midrule
\multirow{3}{*}{$\substack{\text{\small $D = 30$}\\ \text{\small (59)}}$}
 & mean   & 15272.54 & 16831.29 & 13702.79 & 467983.24 & 11779.09 & \textbf{8630.72} & 438534.58 \\
 & median & 14933.12 & 16162.84 & 13226.23 & 358024.96 & 11761.69 & \textbf{8395.07} & 277179.35 \\
 & FBTC   & 8.45 & 12.16 & 14.11 & 9.17 & 14.50 & \textbf{15.83} & 8.55 \\
\bottomrule
\end{tabular}
\end{table}

\subsection{Per-class results at official budgets}\label{sec:per-type}

Table~\ref{tab:class-summary} reports the summed mean, median, best, and
FBTC over the same official-budget cells, grouped by function class
(unimodal and simple-multimodal, hybrid, composition); CEC2017 $f_2$ is excluded, leaving $123$
functions. The three classes give three different orderings.

On composition functions MSC-CMA-ES attains the best value on all four
aggregates: the lowest summed mean, median, and best error, and the
highest FBTC ($9.69$, compared to $3.56$ for BIPOP-CMA-ES and $8.45$ for the
strongest DE baseline, NL-SHADE-RSP).  On unimodal and simple-multimodal functions it attains the
lowest summed median ($138.79$) but the lowest FBTC of the field
($25.46$, compared to $27.04$--$30.63$ for the others); its summed mean
($221.10$) is within $0.12$ of the lowest (L-SRTDE, $220.98$).  On hybrid
functions the leading DE algorithms are several-fold to an order of
magnitude ahead on mean and median ($39.89$/$33.26$ for jSO/L-SRTDE
compared to $265.11$/$294.35$ for MSC-CMA-ES), and every DE baseline
outscores both CMA variants on FBTC; the two CMA variants sit together
at the back, with the exception that one DE baseline, NL-SHADE-RSP, is
worse than both on mean and median.

These orderings are consistent with the per-dimension picture of
Section~\ref{sec:per-dim}.  Structure-aware restarts are designed to act on the composition class.
The composition-class results are consistent with this design intent.

Table~\ref{tab:class-summary} is restricted to the cells with $D\le 20$,
the design envelope of Section~\ref{sec:protocol}: NBC basin identification
is reliable only while basins remain recognizable from a sample of
affordable size~\cite{Preuss2010,Preuss2012}.  The $D = 30$ cells are
reported per dimension in Table~\ref{tab:dim-summary}.

\begin{table}[htbp]
\centering
\caption{Per-class totals over the eight official-budget cells with
  $D\in\{5,10,15,20\}$. Lowest error or highest FBTC value per row in bold.}
\label{tab:class-summary}
\small
\begin{tabular}{ll||rr|rrrrr}
\toprule
Class & Metric & MSC-CMA-ES & BIPOP-CMA-ES & ARRDE & j2020 & jSO & L-SRTDE & NL-SHADE-RSP \\
\midrule
\multirow{4}{*}{\shortstack[l]{Unimodal +\\ simple-mult.}}
 & mean   & 221.10 & 313.02 & 312.15 & 301.22 & 345.24 & \textbf{220.98} & 406.37 \\
 & median & \textbf{138.79} & 231.64 & 249.80 & 251.47 & 337.87 & 194.11 & 437.91 \\
 & best   & 36.71 & \textbf{15.13} & 55.71 & 25.87 & 110.91 & 103.89 & 93.98 \\
 & FBTC   & 25.46 & 29.85 & 29.64 & 27.04 & 28.35 & 28.96 & \textbf{30.63} \\
\midrule
\multirow{4}{*}{Hybrid}
 & mean   & 265.11 & 357.75 & 63.74 & 254.48 & \textbf{39.89} & 58.17 & 524.30 \\
 & median & 294.35 & 315.70 & 36.06 & 180.99 & 39.38 & \textbf{33.26} & 377.36 \\
 & best   & 12.09 & 6.54 & \textbf{1.33} & 27.70 & 3.22 & 2.83 & 9.28 \\
 & FBTC   & 8.01 & 8.60 & 14.47 & 12.25 & \textbf{15.20} & 12.40 & 12.90 \\
\midrule
\multirow{4}{*}{Composition}
 & mean   & \textbf{5292.49} & 7991.12 & 6029.94 & 7346.77 & 8749.36 & 41325.87 & 6799.60 \\
 & median & \textbf{5543.97} & 8102.51 & 6249.91 & 7312.22 & 8848.94 & 9320.80 & 6954.21 \\
 & best   & \textbf{3008.56} & 4668.57 & 3211.60 & 4795.82 & 7962.90 & 7596.05 & 4037.85 \\
 & FBTC   & \textbf{9.69} & 3.56 & 5.52 & 5.17 & 2.26 & 2.15 & 8.45 \\
\bottomrule
\end{tabular}
\end{table}

\subsection{Per-function comparison}\label{sec:per-function}

At the per-function level the class pattern of Table~\ref{tab:class-summary} persists: MSC-CMA-ES is significantly stronger on composition functions, while the DE baselines are significantly better on the majority of hybrid functions. The complete per-function tables---mean, median, best, and FBTC with per-function Wilcoxon signed-rank tests (Benjamini--Hochberg FDR, $\alpha = 0.05$) against MSC-CMA-ES---are given in the Supplementary Material for every (suite, dimension) cell.

\subsection{Budget scaling}\label{sec:budget}

We evaluate the algorithms across multiple budgets to analyze how the final accuracy depends on the  maximum number of function evaluations.

The raw fixed-budget target coverage $\mathrm{FBTC}(b)$ defined in Section~\ref{sec:protocol} is not necessarily a non-decreasing function of the budget $b$. Unlike the runtime perspective in COCO, which is monotonically non-decreasing by construction because it records the first hitting time \cite{HansenAnytime2022,HansenCOCO2021}, a budget-indexed quality measure carries no such guarantee. 

To resolve this, we replace $\mathrm{FBTC}(b)$ with its monotone envelope, defining the coverage at budget $b$ as $\max_{b' \le b} \mathrm{FBTC}(b')$. This provides a fixed-budget analogue to the best-so-far convergence curves used in anytime performance assessment \cite{HansenAnytime2022,Wang2022iohanalyzer}.

\begin{definition}\label{def:fbtc-mono}
Let $b_1 < b_2 < \dots < b_K$ be the budgets available for a given
(suite, dimension, function) cell, and let $\mathrm{FBTC}(b_k)\in[0,1]$ be the
raw coverage of Section~\ref{sec:protocol} at budget $b_k$. The monotone
envelope is the running maximum
$$
  \widehat{\mathrm{FBTC}}(b_k)=\max_{1\le j\le k}\mathrm{FBTC}(b_j), \qquad k=1,\dots,K.
$$
\end{definition}
For simplicity, in the remainder of the paper we use $\mathrm{FBTC}$ to refer to the monotone envelope $\widehat{\mathrm{FBTC}}$.

\begin{figure*}[t]\centering
  \begin{subfigure}{0.24\linewidth}\centering
    \includegraphics[width=\linewidth]{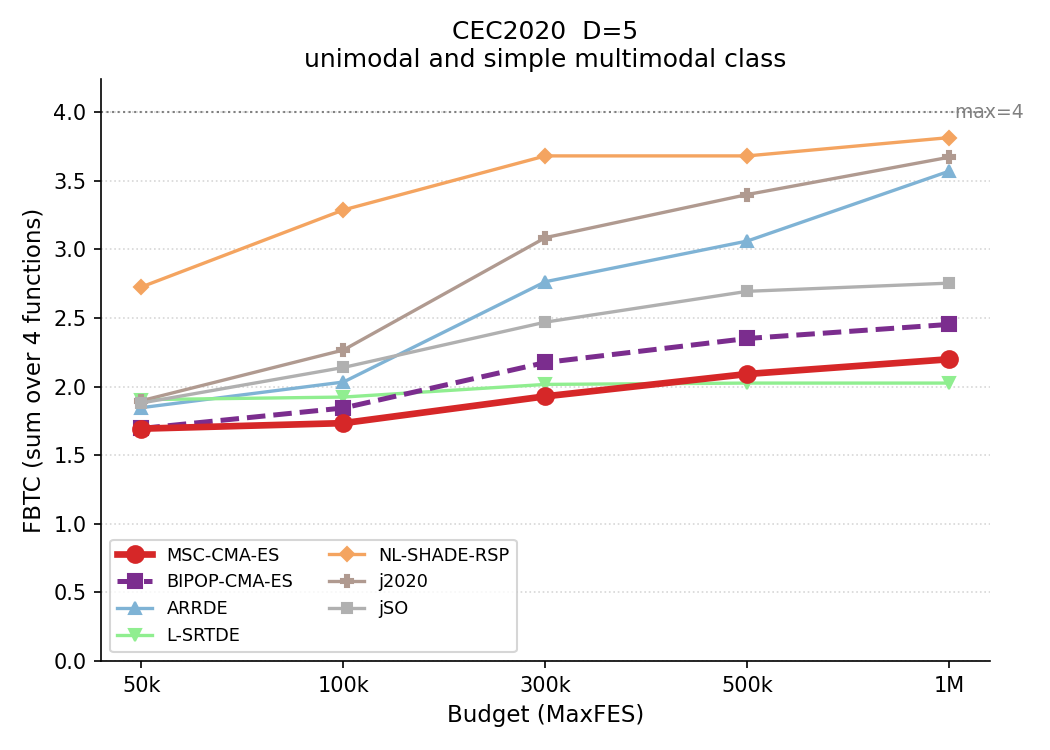}
    \caption{$f_1$--$f_4$.}\label{fig:fbtc-bh-d5-basic}
  \end{subfigure}\hfill
  \begin{subfigure}{0.24\linewidth}\centering
    \includegraphics[width=\linewidth]{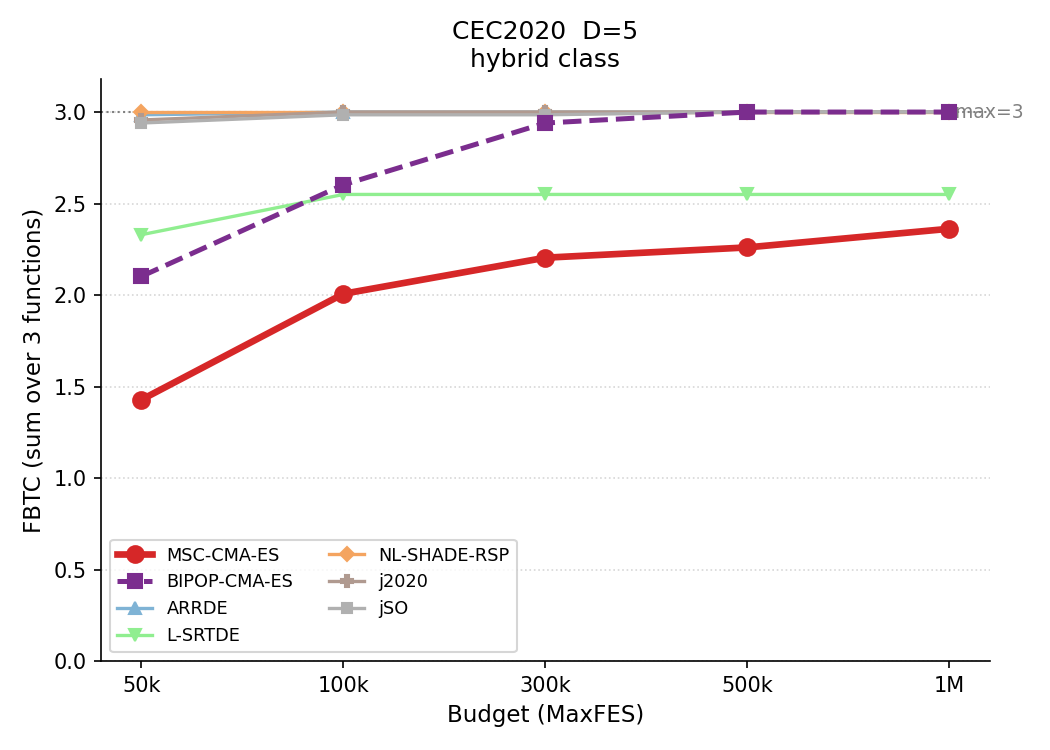}
    \caption{$f_5$--$f_7$.}\label{fig:fbtc-bh-d5-hybrid}
  \end{subfigure}
  \begin{subfigure}{0.24\linewidth}\centering
    \includegraphics[width=\linewidth]{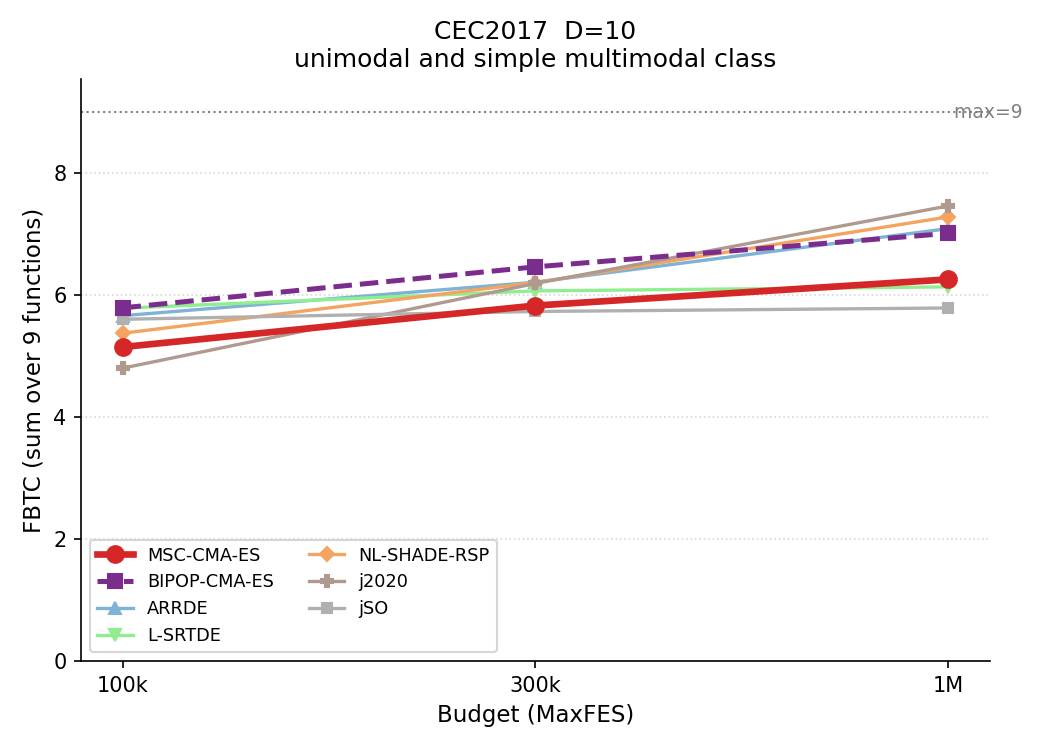}
    \caption{$f_1$--$f_{10}$.}\label{fig:fbtc-bh-d10-basic}
  \end{subfigure}\hfill
  \begin{subfigure}{0.24\linewidth}\centering
    \includegraphics[width=\linewidth]{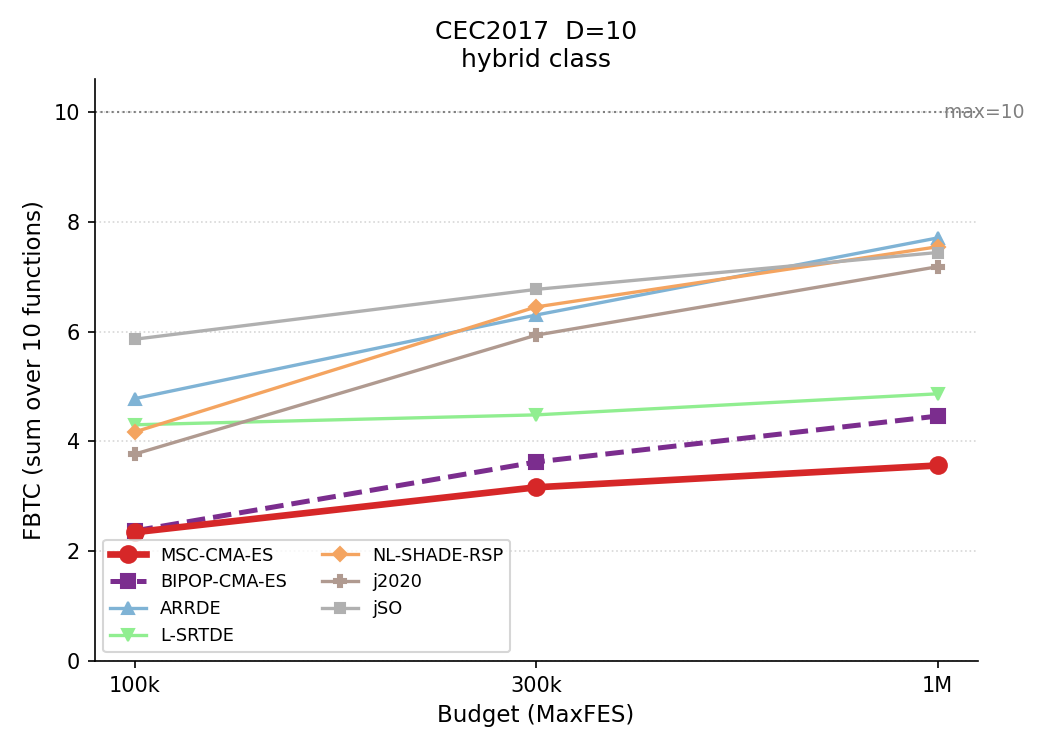}
    \caption{$f_{11}$--$f_{20}$.}\label{fig:fbtc-bh-d10-hybrid}
  \end{subfigure}
  \caption{FBTC on the unimodal, simple-multimodal and hybrid classes of CEC2020 $D = 5$ -- (a), (b) and CEC2017 $D = 10$ -- (c), (d), summed over the functions.
  }\label{fig:fbtc-bh}
\end{figure*}

On the unimodal, simple-multimodal and hybrid classes of CEC2020 $D=5$ and CEC2017 $D=10$ (Fig.~\ref{fig:fbtc-bh}), MSC-CMA-ES yields one of the three lowest $\mathrm{FBTC}$ values at each budget. Coverage increases with the budget across all four panels. For instance, on the CEC2017 $D=10$ hybrid class, MSC-CMA-ES improves from 2.35 to 3.56 (out of a maximum of 10) between $10^5$ and $10^6$ evaluations. Its growth rate is $+1.22$ per decade, compared to $+2.9$ to $+3.4$ for the leading DE baselines, maintaining a constant per-class ranking over the evaluated range. An exception occurs on the CEC2020 $D=5$ hybrid class (Fig.~\ref{fig:fbtc-bh-d5-hybrid}): the DE baselines reach maximum coverage at the lowest budget, while the MSC-CMA-ES growth rate ($+0.85$) closely matches that of BIPOP-CMA-ES ($+0.90$).

\begin{figure*}[htpb]
    \centering
    \begin{subfigure}{0.32\textwidth}
        \includegraphics[width=\linewidth]{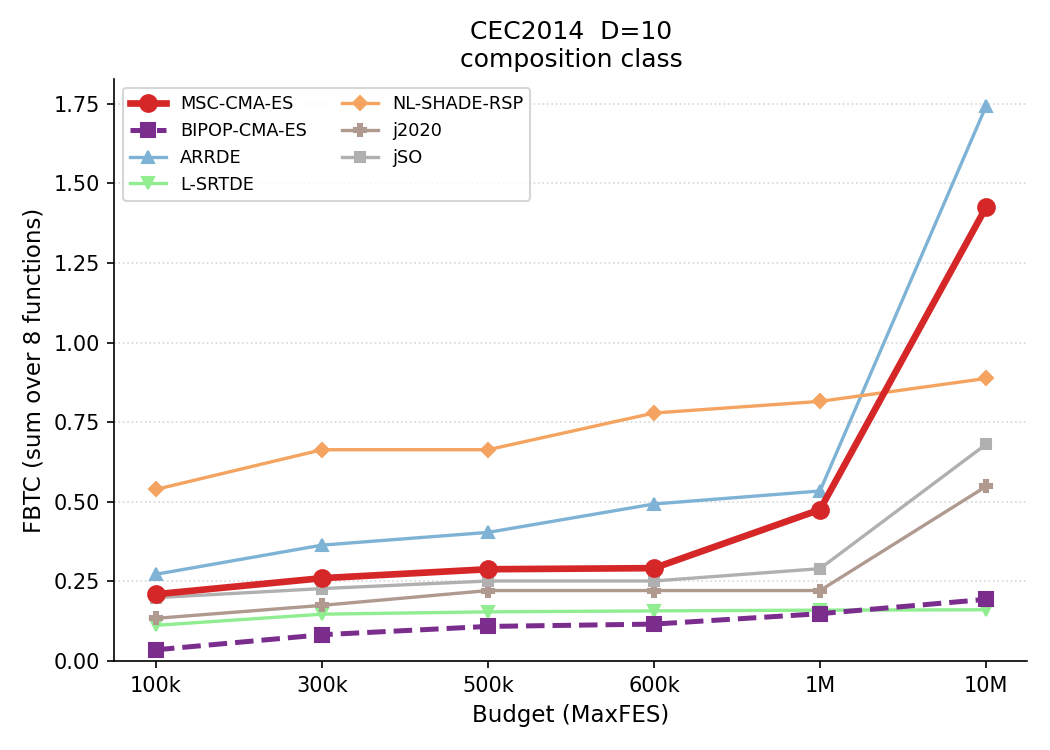}
        \caption{CEC2014}
        \label{fig:nfl_cec2014}
    \end{subfigure}\hfill
    \begin{subfigure}{0.32\textwidth}
        \includegraphics[width=\linewidth]{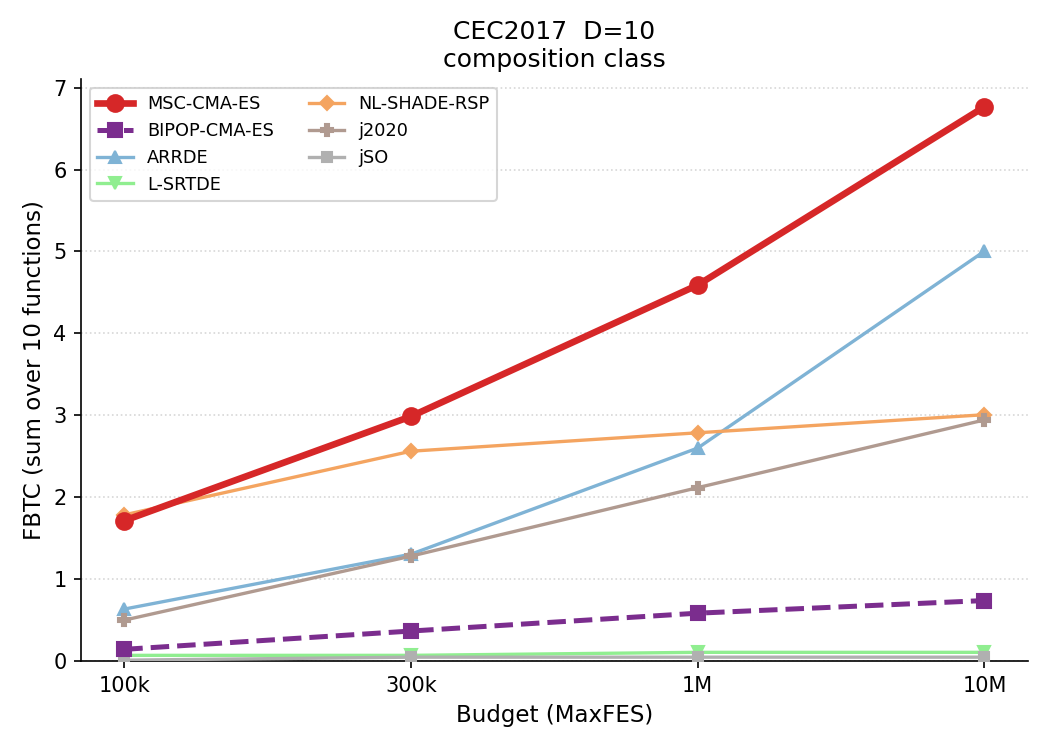}
        \caption{CEC2017}
        \label{fig:nfl_cec2017}
    \end{subfigure}\hfill
    \begin{subfigure}{0.32\textwidth}
        \includegraphics[width=\linewidth]{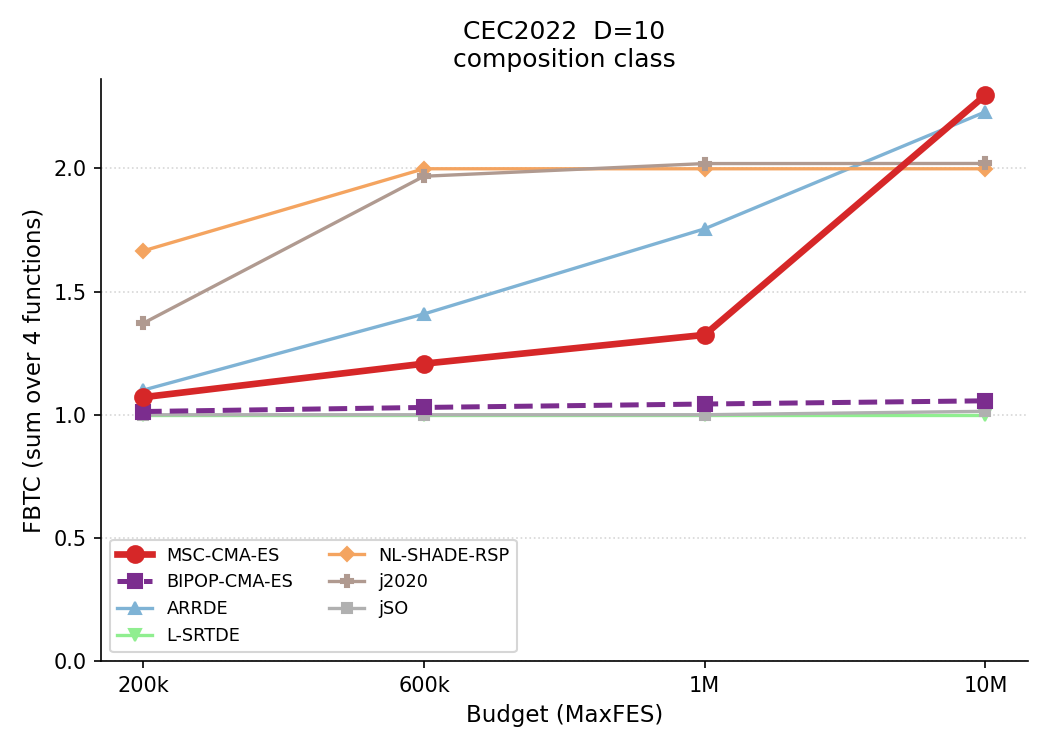}
        \caption{CEC2022}
        \label{fig:nfl_cec2022}
    \end{subfigure}
    
    \caption{Composition functions at $D=10$ across the CEC2014, CEC2017, and CEC2022 suites.}
    \label{fig:nflt-comparison}
\end{figure*}

On composition functions at $D=10$ across the three suites, MSC-CMA-ES attains the highest $\mathrm{FBTC}$ on the CEC2017 set, while NL-SHADE-RSP and j2020 attain higher $\mathrm{FBTC}$ on the CEC2014 and CEC2022 suites.

\begin{figure*}[t]\centering
  \begin{subfigure}{0.24\linewidth}\centering
    \includegraphics[width=\linewidth]{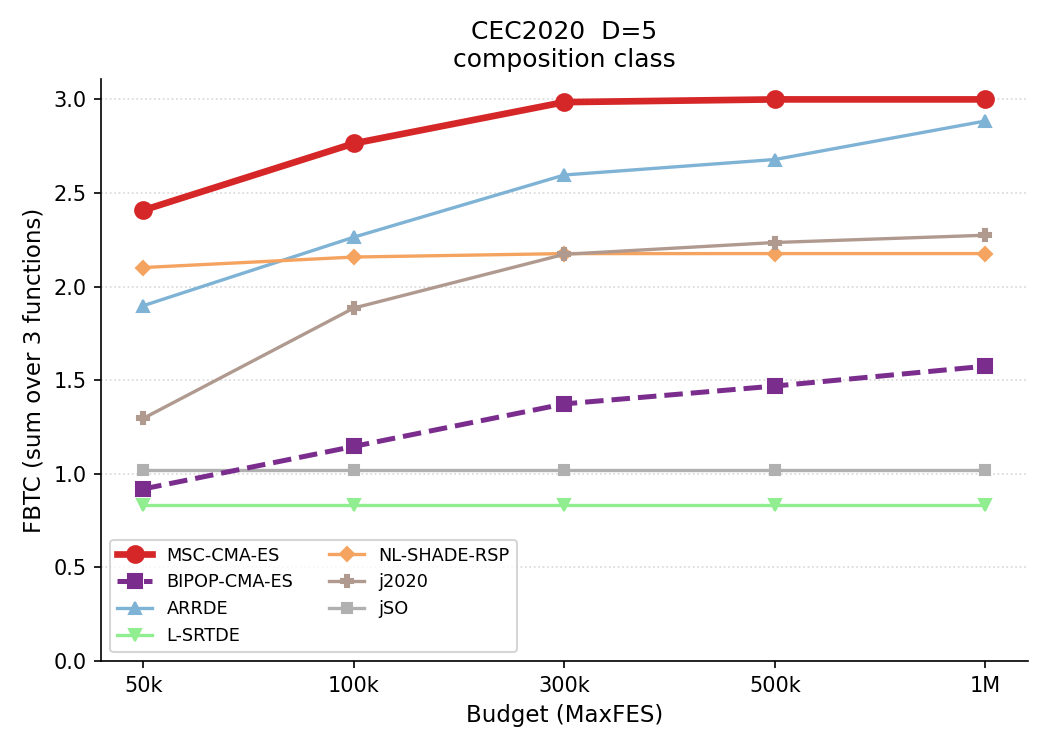}
    \caption{$D = 5$.}\label{fig:fbtc-comp-d5}
  \end{subfigure}\hfill
  \begin{subfigure}{0.24\linewidth}\centering
    \includegraphics[width=\linewidth]{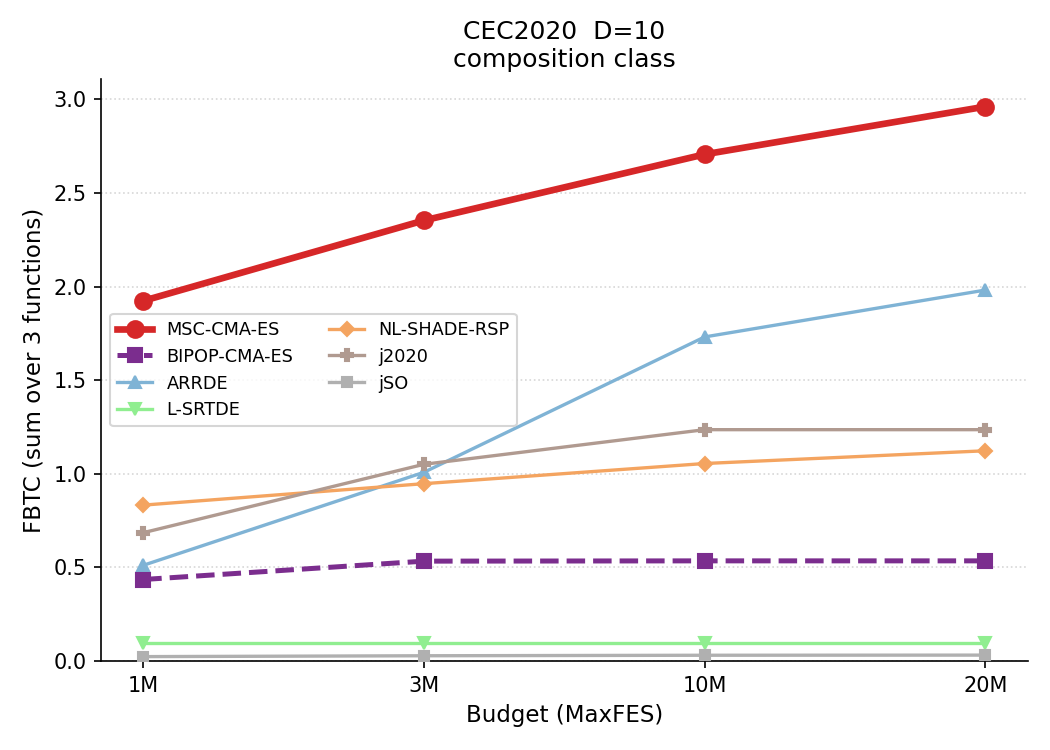}
    \caption{$D = 10$.}\label{fig:fbtc-comp-d10}
  \end{subfigure}
  \begin{subfigure}{0.24\linewidth}\centering
    \includegraphics[width=\linewidth]{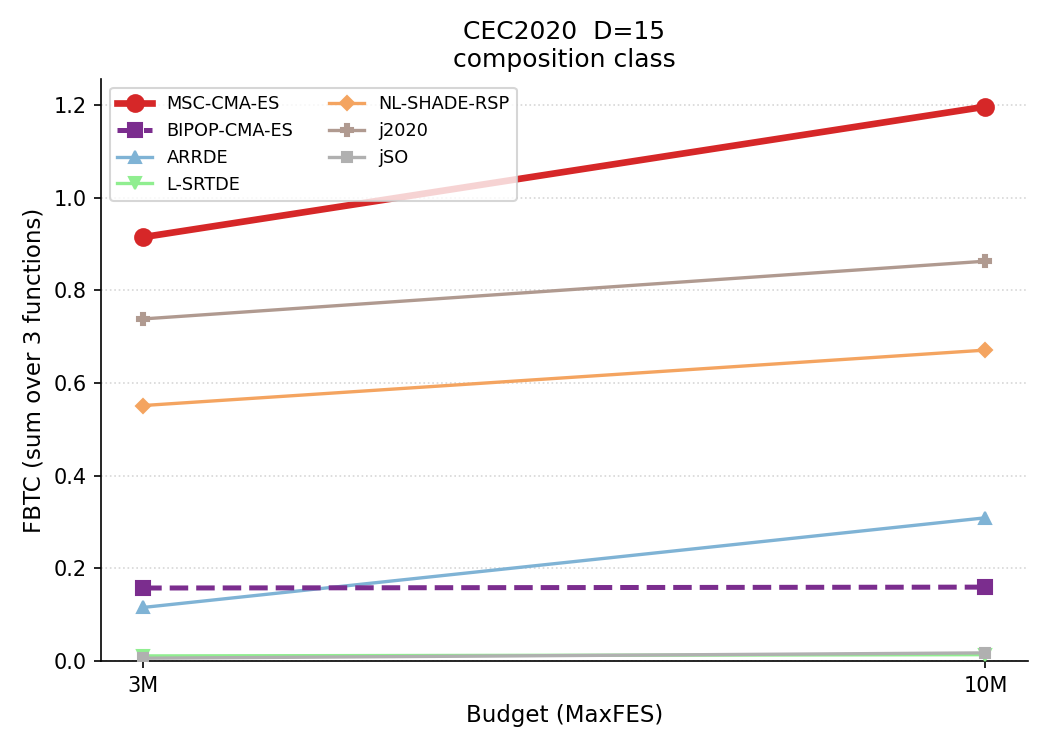}
    \caption{$D = 15$.}\label{fig:fbtc-comp-d15}
  \end{subfigure}\hfill
  \begin{subfigure}{0.24\linewidth}\centering
    \includegraphics[width=\linewidth]{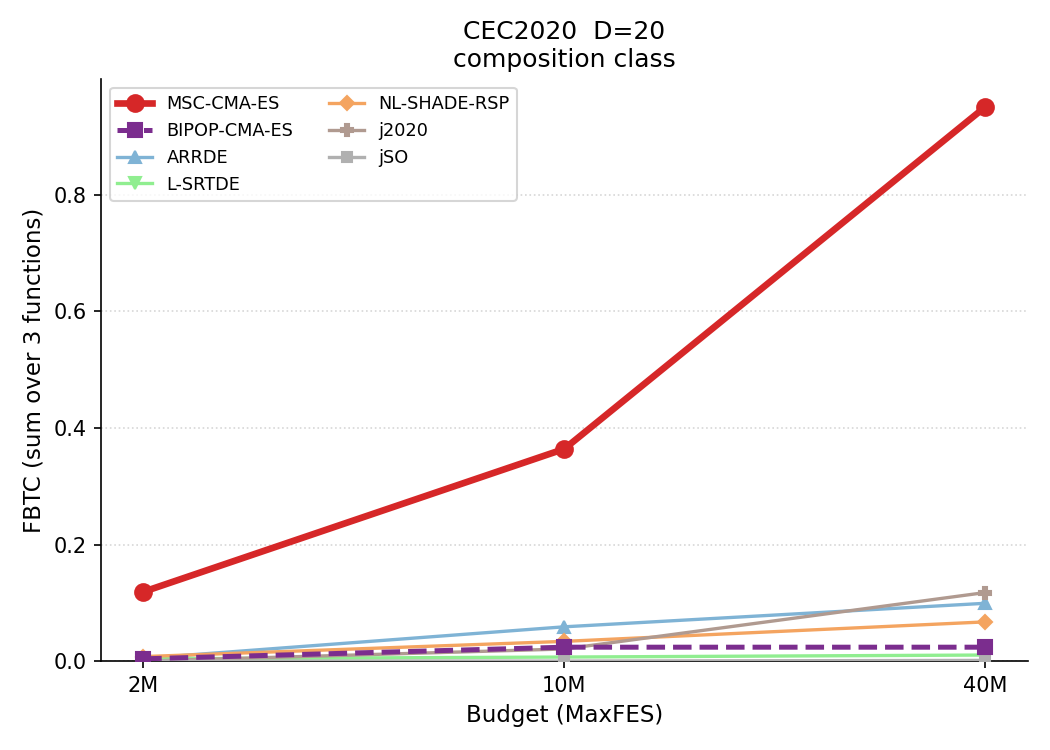}
    \caption{$D = 20$.}\label{fig:fbtc-comp-d20}
  \end{subfigure}
  \caption{FBTC on the composition class ($f_8$--$f_{10}$,
  class maximum $3$) of CEC2020 at $D\in\{5,10,15,20\}$.}\label{fig:fbtc-comp}
\end{figure*}
Across all dimensionalities of the CEC2020 composition class ($D \in \{5, 10, 15, 20\}$), MSC-CMA-ES attains the highest $\mathrm{FBTC}$ at every $D$.
At $D=5$ it reaches the maximum coverage of 3.0 by approximately $5 \times 10^5$ evaluations.

\section{Discussion}\label{sec:discussion}
We now examine what the aggregate rankings and the budget-scaling behaviour
imply about where and why the structure-aware restarts help.

On the composition class the three magnitude measures (worst-, median-, and
best-SUM) order the algorithms almost identically, whereas FBTC reorders the
field. On CEC2014 $D = 10$ (Fig.~\ref{fig:rank-2014d10}) FBTC lifts
NL-SHADE-RSP from third on median-SUM to first and drops MSC-CMA-ES from first
to third, and on CEC2022 $D = 20$ (Fig.~\ref{fig:rank-2022d20}) it moves
BIPOP-CMA-ES from second to fourth.

At the official $3\times10^{5}$ budget MSC-CMA-ES is mid-pack on the CEC2017
$D = 30$ composition class (fourth on worst-SUM, third on median- and
mean-SUM, second on best-SUM), but at $10^{6}$ evaluations it attains the
lowest worst-, mean-, and best-SUM and nearly ties ARRDE on median-SUM.
The reversal suggests that the D=30 loss observed at the official budget is at least partly due to budget limitation, rather than purely intrinsic:  in higher dimension a fixed Sobol sample yields a sparser basin model, so more budget is needed before the information gathered in Phase~0 translates into better restarts.

\begin{figure*}[htpb]
    \centering
    \begin{subfigure}{0.48\textwidth}
        \includegraphics[width=\linewidth]{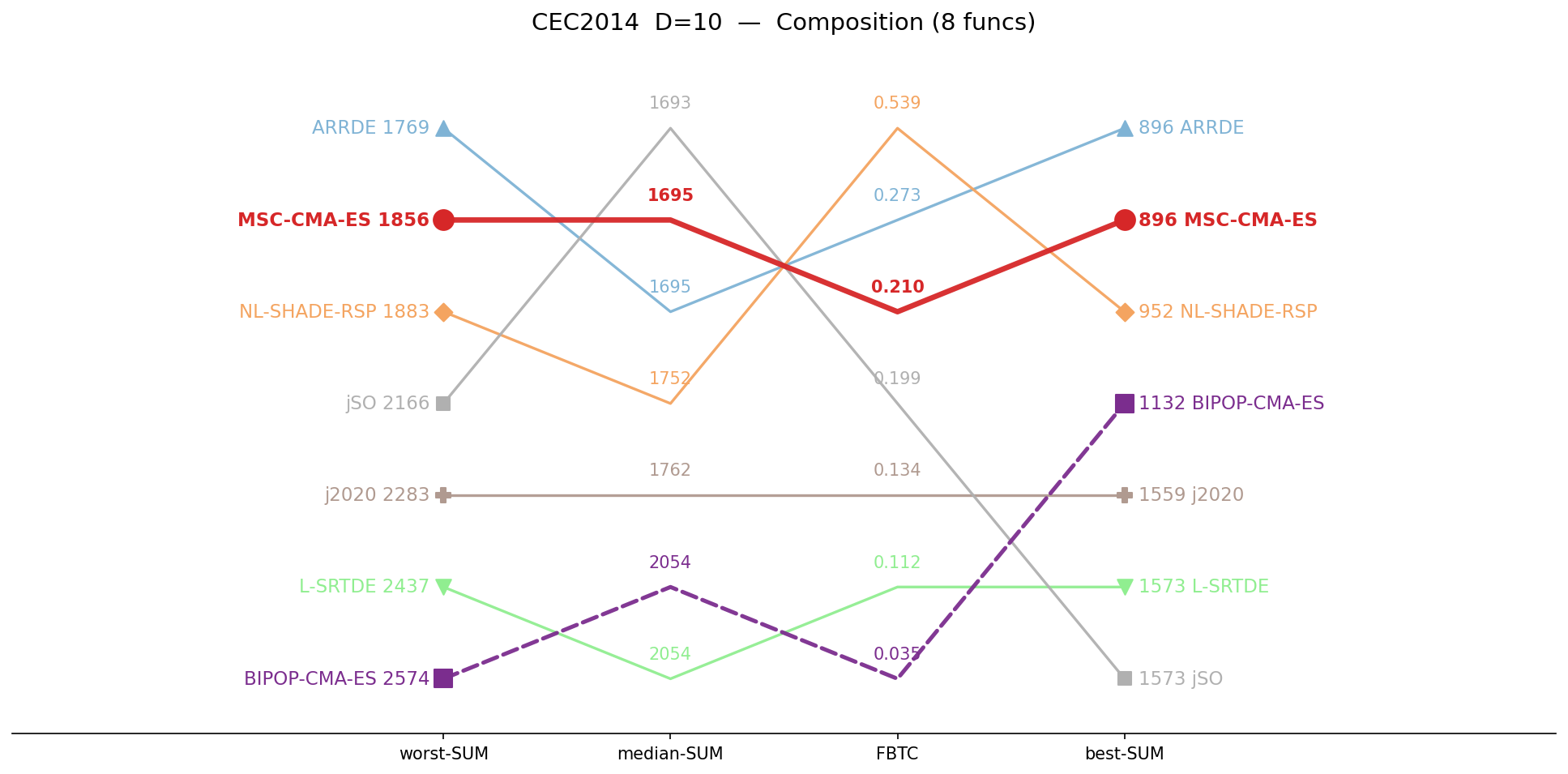}
        \caption{CEC2014 ($D=10$)}
        \label{fig:rank-2014d10}
    \end{subfigure}\hfill
    \begin{subfigure}{0.48\textwidth}
        \includegraphics[width=\linewidth]{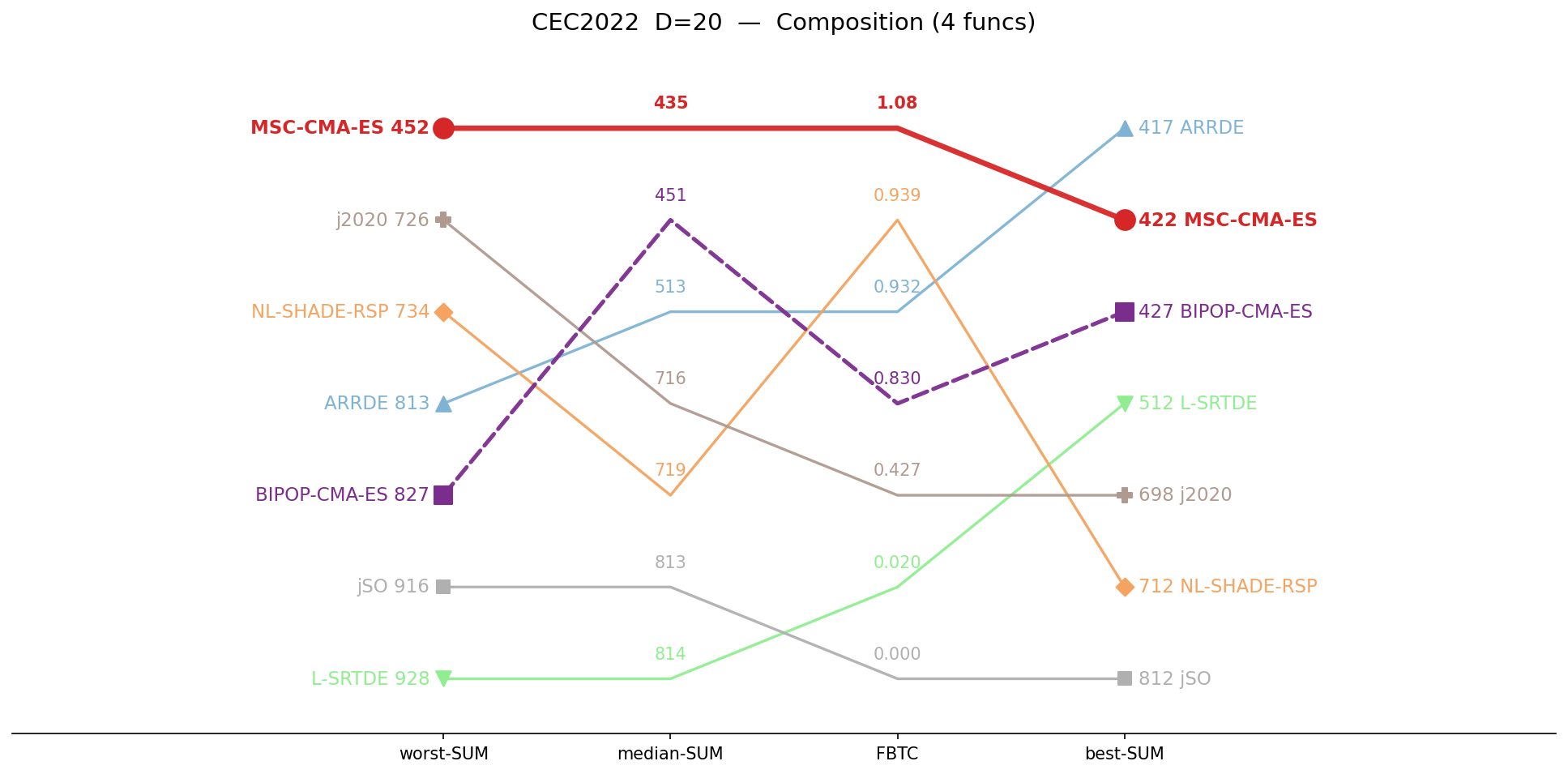}
        \caption{CEC2022 ($D=20$)}
        \label{fig:rank-2022d20}
    \end{subfigure}
    
    \caption{Algorithm comparison on the composition classes across four aggregate measures.}
    \label{fig:rank-comparison-all}
\end{figure*}

\begin{figure*}[t]\centering
  \begin{subfigure}{0.49\textwidth}\centering
    \includegraphics[width=\linewidth]{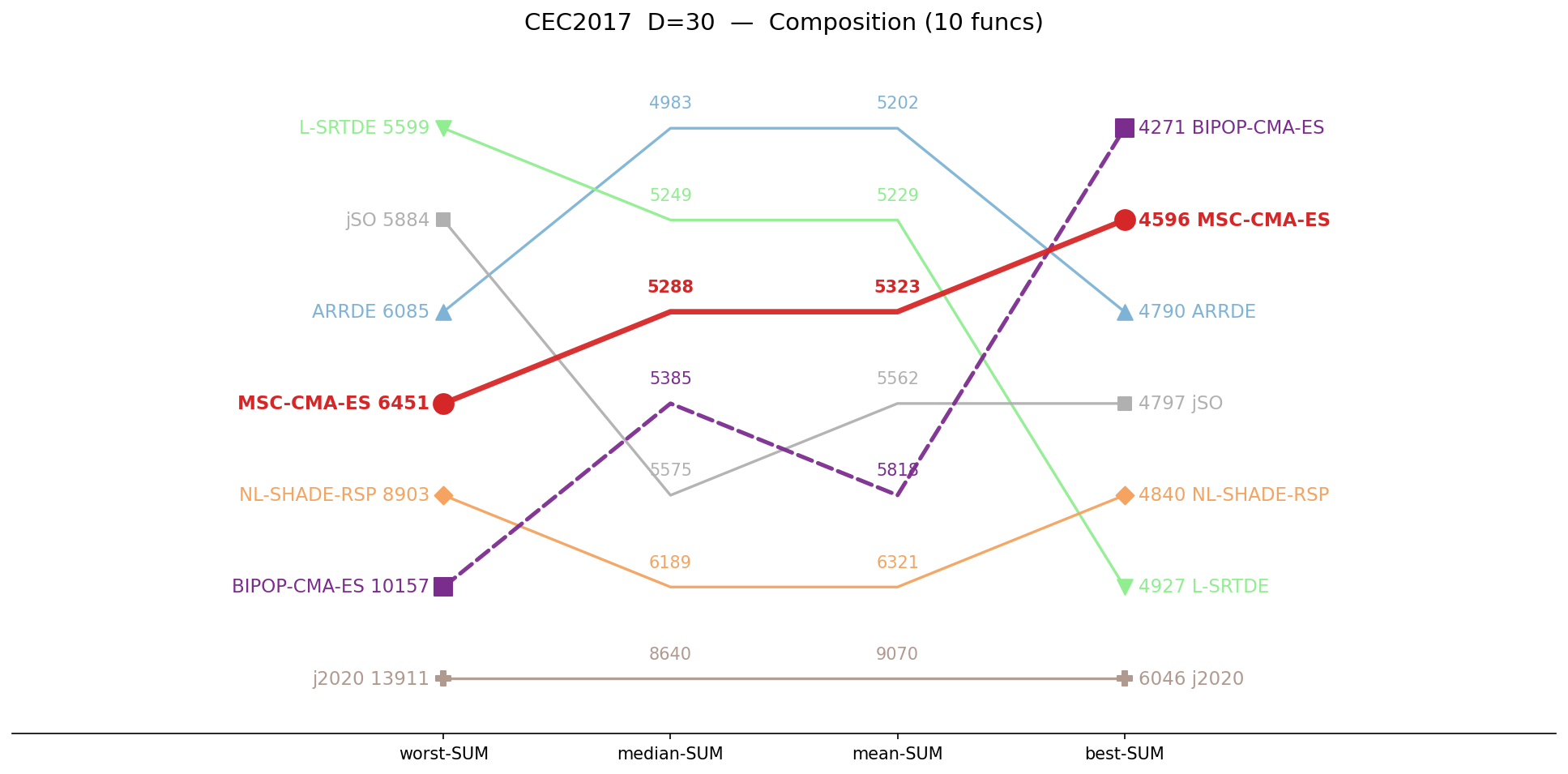}
    \caption{Official budget, $3\times 10^5$ evaluations.}
    \label{fig:d30-comp-official}
  \end{subfigure}\hfill
  \begin{subfigure}{0.49\textwidth}\centering
    \includegraphics[width=\linewidth]{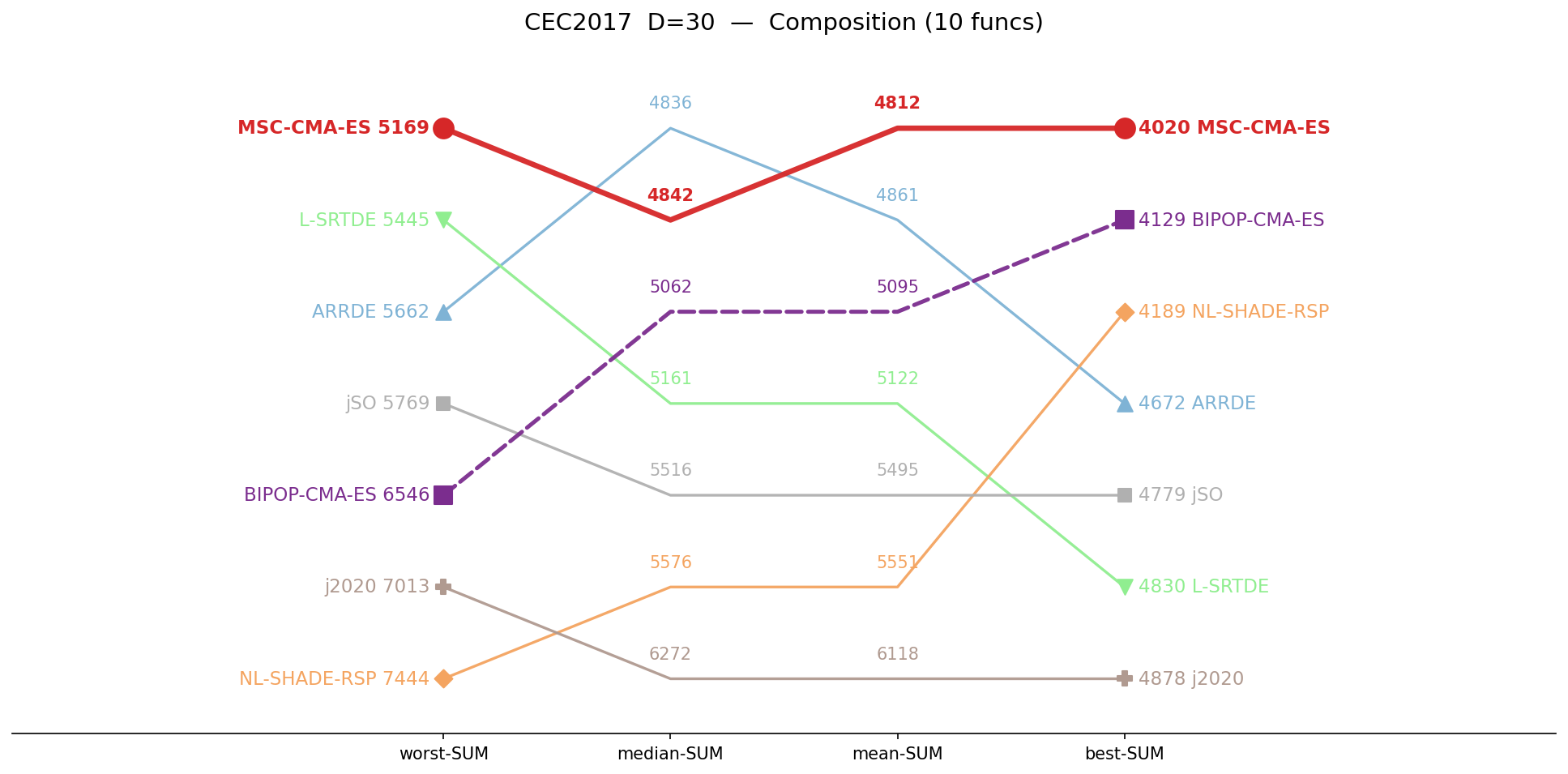}
    \caption{Extended budget, $10^6$ evaluations.}
    \label{fig:d30-comp-1m}
  \end{subfigure}

  \caption{CEC2017 $D = 30$ composition-class stress test under two
  evaluation budgets.}
  \label{fig:d30-comp-budget}
\end{figure*}

As dimensionality increases, the gap between MSC-CMA-ES and the baselines
grows: no baseline exceeds an $\mathrm{FBTC}$ of 1.98 at $D = 10$, 1.5 at
$D = 15$, or 1.2 at $D = 20$, whereas MSC-CMA-ES keeps increasing its coverage
monotonically over the evaluated budgets. On the $D = 20$ composition class it
rises from $\approx1.1$ at $2\times10^{6}$ evaluations to 2.6 (of a maximum 3)
at $4\times10^{7}$ (Fig. \ref{fig:fbtc-comp-d20}).

\begin{table}[t]
\centering
\caption{Per-cycle-class budget usage and refinement contribution; B/C cyc: median
count of B- and C-class cycles per run. seed\,B/C: share of runs whose
refinement seed---the last cycle to lower the global best---is a B resp.\ C
cycle. Ref: share of the budget spent in refinement. Improv:
median $\log_{10}(f_{\mathrm{pre}}/f_{\mathrm{final}})$, the order-of-magnitude
error reduction contributed by refinement.}
\label{tab:cycle-anatomy}
\begin{tabular}{@{}llrrrrrr@{}}
\toprule
Class & Budget & B\,cyc & C\,cyc & seed\,B & seed\,C & Ref & Improv \\
\midrule
\multirow{3}{*}{Hybrid}
 & $10^5$          & 1  & 1  & 89\% & 11\% & 4.0\% & 0.80 \\
 & $3{\times}10^5$ & 3  & 3  & 98\% & 2\%  & 3.2\% & 0.14 \\
 & $10^6$          & 9  & 10 & 97\% & 3\%  & 2.8\% & 0.21 \\
\midrule
\multirow{4}{*}{Comp.}
 & $10^5$          & 2   & 2   & 60\% & 40\% & 3.6\% & 0.02 \\
 & $3{\times}10^5$ & 6   & 6   & 59\% & 41\% & 2.7\% & 1.84 \\
 & $10^6$          & 19  & 19  & 53\% & 47\% & 1.7\% & 2.46 \\
 & $10^7$          & 189 & 189 & 41\% & 59\% & 0.4\% & 4.94 \\
\bottomrule
\end{tabular}
\end{table}

Table~\ref{tab:cycle-anatomy} summarizes how MSC-CMA-ES spends its budget on
the $D = 10$ cells. The cycle count scales almost linearly with the budget---a
composition run grows from four cycles at $10^5$ evaluations to $378$ at
$10^7$---and the two configurations execute in near-equal numbers, since they
strictly alternate. Composition runs complete roughly twice as many cycles as
hybrid runs at a matched budget ($38$ versus $19$ at $10^6$), the hybrid loop
terminating earlier as its restarts converge and exhaust the reserve.
Although the two B/C parameter configurations  were tuned on disjoint function subsets
(see Section~\ref{sec:protocol}),
their resulting parameters admit a role reading that transfers across the
schedule: the composition-tuned C is a many-basin, small-population explorer,
and the unimodal/hybrid-tuned B a few-basin, large-population exploiter
(Table~\ref{tab:configs}). On composition landscapes the two roles engage
jointly---the refinement seed originates from both classes in comparable
proportion (Table~\ref{tab:cycle-anatomy})---whereas on hybrids it comes almost
exclusively from B.

The refinement seed and the refinement gain separate the two classes. On
hybrid functions the seed is almost always a B cycle ($89$--$98\%$ of runs)
and refinement is near-cosmetic, reducing the final error by under one order
of magnitude at every budget (Improv $\le 0.80$); the accuracy is produced by
the cycles, not the polish. On composition functions the seed is split between
the two configurations and refinement carries the accuracy: its contribution
grows monotonically with the budget, from essentially zero at $10^5$ to nearly
five orders of magnitude at $10^7$ (Improv $0.02\!\to\!4.94$). This pattern is
consistent with the class FBTC profile of Table~\ref{tab:class-summary}---deep-target
coverage is exactly what a multi-order refinement reduction produces---though
the present data establish the association, not its direction. The effect is
specific to the design envelope: at $D = 30$ the same refinement leaves the
incumbent essentially unchanged (median reduction below $0.15$ orders at both
$3\times10^5$ and $10^6$ evaluations), consistent with the sparser basin model of this section starving the seed of quality.

\FloatBarrier
\section{Conclusion}

MSC-CMA-ES demonstrates that restarts in CMA-ES can benefit from explicitly reusing sampled landscape structure. These gains are most evident on composition functions, where basin-aware seeding and redundant-visit suppression improve both error reduction and target coverage. The method is less effective on hybrid functions and loses its low-dimensional advantage near D=30 under the official evaluation budget. The latter is consistent with the finite-sample limits of nearest-better basin discovery. Future work should therefore focus on adaptive Phase-0 sample sizing, ablation studies of the basin-discovery, seeding, exclusion, and refinement components, and  hybrid-specific mechanisms.


\bibliographystyle{plainnat}

\begin{thebibliography}{99}

\bibitem{Akiba2019}
T.~Akiba, S.~Sano, T.~Yanase, T.~Ohta, and M.~Koyama.
\newblock Optuna: A next-generation hyperparameter optimization framework.
\newblock In \emph{Proc.\ ACM SIGKDD}, pages 2623--2631, 2019.
\newblock \doi{10.1145/3292500.3330701}.

\bibitem{AugerHansen2005}
A.~Auger and N.~Hansen.
\newblock A restart {CMA} evolution strategy with increasing population size.
\newblock In \emph{Proc.\ IEEE Congress on Evolutionary Computation (CEC)}, pages 1769--1776, 2005.
\newblock \doi{10.1109/CEC.2005.1554902}.

\bibitem{AwadEtAl2017}
N.~H. Awad, M.~Z. Ali, J.~J. Liang, B.~Y. Qu, and P.~N. Suganthan.
\newblock Problem definitions and evaluation criteria for the {CEC} 2017 special session on single objective real-parameter numerical optimization.
\newblock Technical report, Nanyang Technological University, 2017.


\bibitem{BrestEtAl2017}
J.~Brest, M.~S. Mau\v{c}ec, and B.~Bo\v{s}kovi\v{c}.
\newblock Single objective real-parameter optimization: Algorithm {jSO}.
\newblock In \emph{Proc.\ IEEE CEC}, pages 1311--1318, 2017.
\newblock \doi{10.1109/CEC.2017.7969456}.

\bibitem{Brest2020j2020}
J.~Brest, M.~S. Mau\v{c}ec, and B.~Bo\v{s}kovi\'{c}.
\newblock Differential evolution algorithm for single objective bound-constrained optimization: Algorithm {j2020}.
\newblock In \emph{Proc.\ IEEE CEC}, pages 1--8, 2020.
\newblock \doi{10.1109/CEC48606.2020.9185551}.


\bibitem{Hansen2001}
N.~Hansen and A.~Ostermeier.
\newblock Completely derandomized self-adaptation in evolution strategies.
\newblock \emph{Evolutionary Computation}, 9(2):159--195, 2001.
\newblock \doi{10.1162/106365601750190398}.

\bibitem{Hansen2009}
N.~Hansen.
\newblock Benchmarking a {BI-population CMA-ES} on the {BBOB-2009} function testbed.
\newblock In \emph{Proc.\ GECCO Companion}, pages 2389--2396, 2009.
\newblock \doi{10.1145/1570256.1570333}.

\bibitem{Hansen2019pycma}
N.~Hansen, Y.~Akimoto, and P.~Baudis.
\newblock {CMA-ES/pycma} on {GitHub}.
\newblock Zenodo, 2019. \url{https://github.com/CMA-ES/pycma}.
\newblock \doi{10.5281/zenodo.2559634}.

\bibitem{HansenCOCO2021}
N.~Hansen, A.~Auger, R.~Ros, O.~Mersmann, T.~Tu\v{s}ar, and D.~Brockhoff.
\newblock {COCO}: A platform for comparing continuous optimizers in a black-box setting.
\newblock \emph{Optimization Methods and Software}, 36(1):114--144, 2021.
\newblock \doi{10.1080/10556788.2020.1808977}.

\bibitem{HansenAnytime2022}
N.~Hansen, A.~Auger, D.~Brockhoff, and T.~Tu\v{s}ar.
\newblock Anytime performance assessment in blackbox optimization benchmarking.
\newblock \emph{IEEE Transactions on Evolutionary Computation}, 26(6):1293--1305, 2022.
\newblock \doi{10.1109/TEVC.2022.3210897}.


\bibitem{Liang2014cec2014}
J.~J. Liang, B.~Y. Qu, and P.~N. Suganthan.
\newblock Problem definitions and evaluation criteria for the {CEC} 2014 special session on single objective real-parameter numerical optimization.
\newblock Technical report, Zhengzhou University / Nanyang Technological University, 2013.



\bibitem{Minion2025}
K.~F. Muzakka, S.~M\"oller, and M.~Finsterbusch.
\newblock Minion: a high-performance derivative-free optimization library.
\newblock \url{https://github.com/khoirulmuzakka/Minion}, 2025.

\bibitem{ARRDE2023}
K.~F. Muzakka, A.~H. Shali, H.~Suhendar, S.~M\"oller, and M.~Finsterbusch.
\newblock Robust differential evolution via nonlinear population size reduction and adaptive restart: The {ARRDE} algorithm.
\newblock arXiv:2511.18429, 2025.
\newblock \doi{10.48550/arXiv.2511.18429}.

\bibitem{Preuss2010}
M.~Preuss.
\newblock Niching the {CMA-ES} via nearest-better clustering.
\newblock In \emph{Proceedings of the 12th annual conference companion on Genetic and evolutionary computation}, pages 1711--1718, 2010.
\newblock \doi{10.1145/1830761.1830793}.

\bibitem{Preuss2012}
M.~Preuss.
\newblock Improved topological niching for real-valued global optimization.
\newblock In \emph{European Conference on the Applications of Evolutionary Computation}, pages 386--395. Springer, 2012.
\newblock \doi{10.1007/978-3-642-29178-4_39}.


\bibitem{Stanovov2021nlshadersp}
V.~Stanovov, S.~Akhmedova, and E.~Semenkin.
\newblock {NL-SHADE-RSP} algorithm with adaptive archive and selective pressure for {CEC} 2021 numerical optimization.
\newblock In \emph{Proc.\ IEEE CEC}, pages 809--816, 2021.
\newblock \doi{10.1109/CEC45853.2021.9504959}.

\bibitem{Stanovov2024lsrtde}
V.~Stanovov and E.~Semenkin.
\newblock Success rate-based adaptive differential evolution {L-SRTDE} for {CEC} 2024 competition.
\newblock In \emph{Proc.\ IEEE CEC}, pages 1--8, 2024.
\newblock \doi{10.1109/CEC60901.2024.10611907}.

\bibitem{Tanabe2014}
R.~Tanabe and A.~S. Fukunaga.
\newblock Improving the search performance of {SHADE} using linear population size reduction.
\newblock In \emph{Proc.\ IEEE CEC}, pages 1658--1665, 2014.
\newblock \doi{10.1109/CEC.2014.6900380}.


\bibitem{TornViitanen1992}
A.~T\"{o}rn and S.~Viitanen.
\newblock Topographical global optimization.
\newblock In \emph{Recent Advances in Global Optimization}, pages 384--398. Princeton University Press, 1992.

\bibitem{Wang2022iohanalyzer}
H.~Wang, D.~Vermetten, F.~Ye, C.~Doerr, and T.~B\"ack.
\newblock {IOHanalyzer}: Detailed performance analyses for iterative optimization heuristics.
\newblock \emph{ACM Transactions on Evolutionary Learning and Optimization}, 2(3):1--29, 2022.
\newblock \doi{10.1145/3510426}.

\bibitem{SugathanCEC2020}
C.~T. Yue, K.~V. Price, P.~N. Suganthan, J.~J. Liang, M.~Z. Ali, B.~Y. Qu, N.~H. Awad, and P.~P. Biswas.
\newblock Problem definitions and evaluation criteria for the {CEC} 2020 special session and competition on single objective bound constrained numerical optimization.
\newblock Technical report, Zhengzhou University / Nanyang Technological University, 2019.

\end{thebibliography}

\end{document}